\theoremstyle{definition}
\newtheorem{definition}{Definition}[section]
\newtheorem{theorem}{Theorem}
\newtheorem{lemma}{Lemma}
\newcommand{\norm}[1]{\left\lVert#1\right\rVert}
\DeclareMathOperator*{\argmax}{arg\,max}
\DeclareMathOperator{\EX}{\mathbb{E}}
\icmltitlerunning{Generalizable Episodic Memory for Deep Reinforcement Learning }
\begin{document}

\twocolumn[
\icmltitle{Generalizable Episodic Memory for Deep Reinforcement Learning }



\icmlsetsymbol{equal}{*}

\begin{icmlauthorlist}
\icmlauthor{Hao Hu}{IIIS}
\icmlauthor{Jianing Ye}{PKU}
\icmlauthor{Guangxiang Zhu}{IIIS}
\icmlauthor{Zhizhou Ren}{UIUC}
\icmlauthor{Chongjie Zhang}{IIIS}

\end{icmlauthorlist}

\icmlaffiliation{IIIS}{The Institute for Interdisciplinary Information Sciences, Tsinghua University, Beijing, China}
\icmlaffiliation{PKU}{Peking University, Beijing, China}
\icmlaffiliation{UIUC}{University of Illinois at Urbana-Champaign, IL, USA}

\icmlcorrespondingauthor{Chongjie Zhang}{chongjiezhang@mail.tsinghua.edu.cn}


\icmlkeywords{Generalizable Episodic Memory, ICML}

\vskip 0.3in
]



\printAffiliationsAndNotice{}

\begin{abstract}
Episodic memory-based methods can rapidly latch onto past successful strategies by a non-parametric memory and improve sample efficiency of traditional reinforcement learning. However, little effort is put into the continuous domain, where a state is never visited twice, and previous episodic methods fail to efficiently aggregate experience across trajectories. To address this problem, we propose \textbf{G}eneralizable \textbf{E}pisodic \textbf{M}emory (\textbf{GEM}), which effectively organizes the state-action values of episodic memory in a generalizable manner and supports implicit planning on memorized trajectories. GEM utilizes a double estimator to reduce the overestimation bias induced by value propagation in the planning process. Empirical evaluation shows that our method significantly outperforms existing trajectory-based methods on various MuJoCo continuous control tasks. To further show the general applicability, we evaluate our method on Atari games with discrete action space, which also shows a significant improvement over baseline algorithms.
\end{abstract}

\section{Introduction}
Deep reinforcement learning (RL) has been tremendously successful in various domains, like classic games \citep{AlphaGo}, video games \citep{DQN}, and robotics \citep{DDPG}. However, it still suffers from high sample complexity, especially compared to human learning \citep{DBLP:conf/aaaiss/TsividisPXTG17}. One significant deficit comes from gradient-based bootstrapping, which is incremental and usually very slow \citep{fastslow}.\par

\begin{figure}[H]
    \vspace{0.2in}
    \includegraphics[width=8cm]{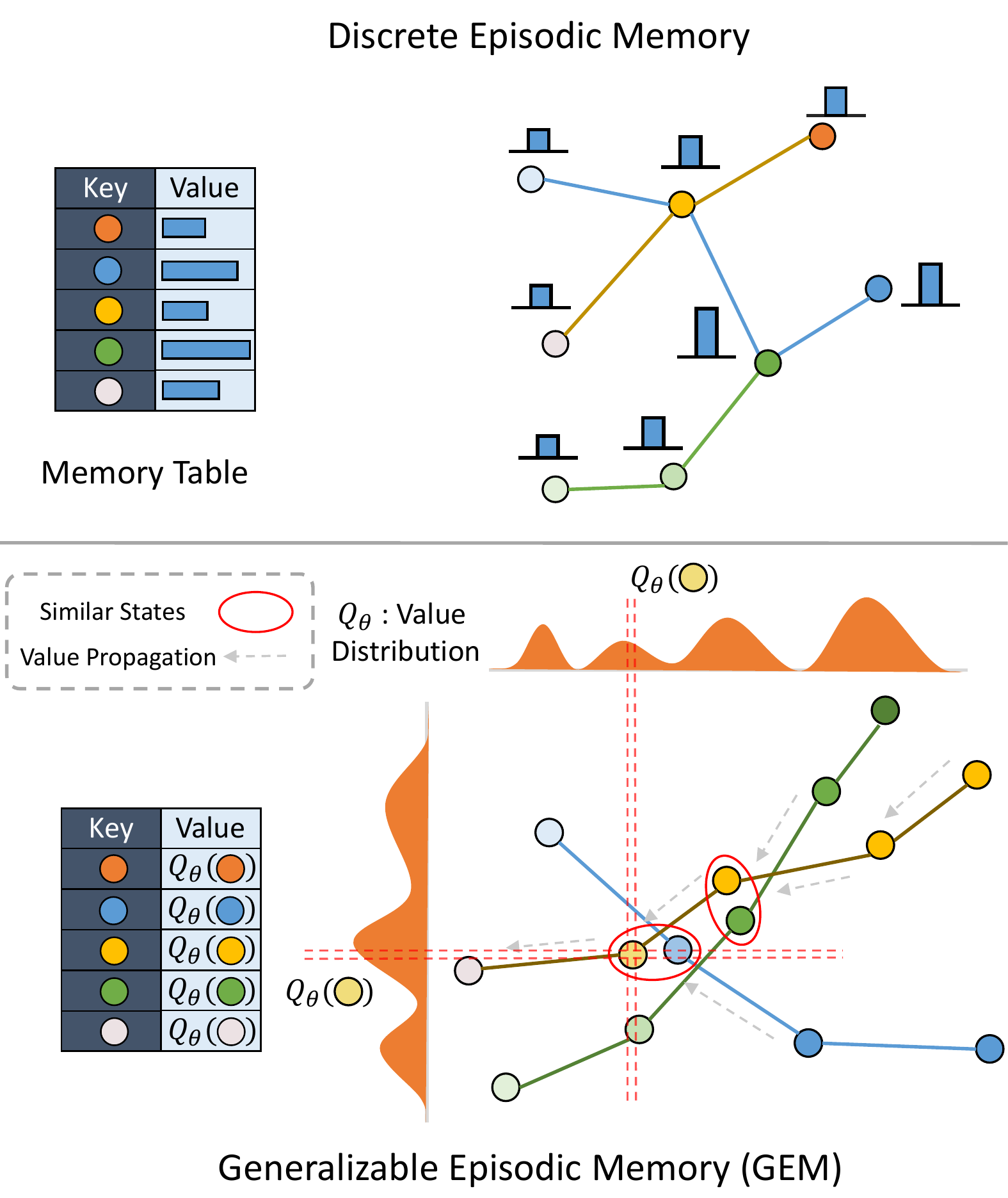}
    \caption{Illustration of our basic idea. Conventional episodic memory usually uses a non-parametric schema to store every state-action pairs and updates their values when re-encountering the same events. Instead, generalizable episodic memory stores the $Q$-values for each event by a parametric network $\mathcal{M}_\theta$ so that each single event can generalize to their neighborhoods. We further perform value propagation along adjacent states to encourage information exchange between related events. }\label{illustration_2}
\end{figure}

Inspired by psychobiological studies of human's episodic memory \citep{sutherland1989configural,marr1991simple,lengyel2007hippocampal} and instance-based decision theory \citep{gilboa1995case}, episodic reinforcement learning  \citep{MFEC,NEC,EMDQN,EVA,associative} presents a non-parametric or semi-parametric framework that fast retrieves past successful strategies to improve sample efficiency. Episodic memory stores past best returns, and the agents can act accordingly to repeat the best outcomes without gradient-based learning.\par

However, most existing methods update episodic memory only by exactly re-encountered events, leaving the generalization ability aside. As Heraclitus, the Greek philosopher, once said, ``No man ever steps in the same river twice.'' \citep{kahn1981art} Similarly, for an RL agent acting in continuous action and state spaces, the same state-action pair can hardly be observed twice. However, humans can connect and retrospect from similar experiences of different times, with no need to re-encounter the same event \citep{shohamy2008integrating}. Inspired by this human's ability to learn from generalization, we propose \textbf{G}eneralizable \textbf{E}pisodic \textbf{M}emory (\textbf{GEM}), a novel framework that integrates the generalization ability of neural networks and the fast retrieval manner of episodic memory.\par

We use Figure \ref{illustration_2} to illustrate our basic idea. Traditional discrete episodic control methods usually build a non-parametric slot-based memory table to store state-value pairs of historical experiences. In the discrete domain, states can be re-encountered many times. This re-encountering enables traditional methods to aggregate trajectories by directly taking maximum among all historical returns. However, this is not feasible in environments with high dimensional states and action space, especially in the continuous domain, where an agent will never visit the same state twice. On the contrary, GEM learns a virtual memory table memorized by deep neural networks to aggregate similar state-action pairs that essentially have the same nature. This virtual table naturally makes continuous state-action pairs generalizable by reconstructing experiences' latent topological structure with neural networks. This memory organization enables planning across different trajectories, and GEM uses this capability to do implicit planning by performing value propagation along trajectories saved in the memory and calculating the best sequence over all possible real and counterfactual combinatorial trajectories. \par

We further propose to use twin networks to reduce the overestimation of these aggregated returns. Merely taking maximum among all possible plans leads to severe overestimation since overestimated values are preserved along the trajectory. Thus, we use two networks to separately calculate which trajectory has the maximal value and how large the maximum value is,  which shares the similar idea with double Q-learning \citep{DoubleQ}.\par

The main contribution of our proposed method is threefold:
1. We present a novel framework inspired by psychology, GEM, to build generalizable episodic memory and improve sample efficiency. GEM consists of a novel value planning scheme for the continuous domain and a twin back-propagation process to reduce overestimation along the trajectories;
2. We formally analyze the estimation and convergence properties of our proposed algorithm;
3. We empirically show the significant improvement of GEM over other baseline algorithms and its general applicability across continuous and discrete domains.
Besides, we analyze the effectiveness of each part of GEM by additional comparisons and ablation studies. 

\section{Preliminaries}
We consider a Markov Decision Process (MDP), defined by the tuple $\langle\mathcal{S},\mathcal{A},P,r,\gamma\rangle$, where $\mathcal{S}$ is the state space and $\mathcal{A}$ the action space. $P(\cdot|s,a) : \mathcal{S} \times \mathcal{A} \times \mathcal{S} \rightarrow \mathbb{R}$ denotes the transition distribution function and $r(s,a):\mathcal{S}\times\mathcal{A}\rightarrow \mathbb{R}$ the reward function. $\gamma\in[0,1)$ is the discount factor.\par
The goal of an RL agent is to learn a policy $\pi:\mathcal{S} \times \mathcal{A} \rightarrow \mathbb{R}$ which maximizes the expectation of a discounted cumulative reward, i.e.,
        \begin{equation*}
            J(\pi)=\EX_{\substack{s_0\sim\rho_0,a_t\sim\pi(\cdot|s_t),\\s_{t+1}\sim P(\cdot|s_t,a_t)}}\left[\sum_{t=0}^{\infty}\gamma^t r(s_t,a_t) \right],
        \end{equation*}
where $\rho_0$ denotes the distribution of initial states.

\subsection{Deterministic Policy Gradient}
In the context of continuous control, actor-critic architecture is widely used \citep{SuttonAC,PetersAC} to handle the continuous action space. In actor-critic framework, a critic $Q_\theta$ is used to estimate the action-value function $Q_\theta(s,a)\approx Q^\pi(s,a)$, while the parametric actor $\pi_\phi$ optimizes its policy using the policy gradient from parametric $Q_\theta$. When $\pi_\phi$ is a deterministic function, the gradient can be expressed as follows:
\begin{equation*}
    \nabla_\phi J(\pi_\phi)=\EX_{s\sim p_\pi}[\nabla_a Q_\theta(s,a)|_{a=\pi(s)}\nabla_\phi\pi_\phi(s)],
\end{equation*}
which is known as deterministic policy gradient theorem \citep{DPG}.\\
$Q_\theta$ can be learned from the TD-target derived from the Bellman Equation \citep{bellman}:
$$y=r(s,a)+\gamma Q_{\theta'}(s',a'),a'\sim\pi_\phi(s')$$

In the following section, when it does not lead to confusion, we use $Q_\theta(s)$ instead of $Q_\theta(s,\pi_\phi(s))$ for simplicity.\par 

\subsection{Double Q-Learning}
Q-learning update state-action values by the TD target \citep{TDLearning} :$$r+\gamma\max_a Q(s',a).$$ However, the max operator in the target can easily lead to overestimation, as discussed in \citet{DoubleQ}. Thus, They propose to estimate the maximum $Q$-values over all actions $\max_a Q(s,a)$ using the double estimator, i.e.,
\begin{equation*}
    Q_{\text{double}}(s)=Q^A(s,a^*_B), a^*_B=\argmax_a Q^B(s,a),
\end{equation*}
 where $Q^A,Q^B$ are two independent function approximators for $Q$-values.

\subsection{Model-Free Episodic Control}
Traditional deep RL approaches suffer from sample inefficiency because of slow gradient-based updates. Episodic control is proposed to speed up the learning process by a non-parametric episodic memory (EM). The key idea is to store good past experiences in a tabular-based non-parametric memory and rapidly latch onto past successful policies when encountering similar states, instead of waiting for many steps of optimization. When different experiences meet up at the same state-action pair $(s, a)$, model-free episodic control \citep[MFEC;][]{MFEC} aggregates the values of different trajectories by taking the maximum return $R$ among all these rollouts starting from the intersection $(s, a)$. Specifically, $Q^{EM}$ is updated by the following equation:
\begin{equation}
\label{equ:mfec2}
Q^{EM}(s,a)=
    \left\{
        \begin{aligned}
            &R,  &\text{if~} (s,a) \notin EM,\\
            &\max{\{R,Q^{EM}(s,a)\}}, &\text{otherwise.} \\
        \end{aligned}
        \right.
\end{equation}
At the execution time, MFEC selects the action according to the maximum $Q$-value of the current state. If there is no exact match of the state, MFEC performs a $k$-nearest-neighbors lookup to estimate the state-action values, i.e.,
        \begin{equation}
            \label{equ:mfec}
            \widehat{Q}^{EM}(s,a)=\left\{
                \begin{aligned}
                    &\frac{1}{k}\sum_{i=1}^k Q(s_i,a), &\mbox{if } (s,a) \notin Q^{EM},\\
                    &Q^{EM}(s,a), &\mbox{otherwise},  \\
                \end{aligned}
                \right.
        \end{equation}
where $s_i$ $(i=1,\cdots,k)$ are the $k$ nearest states from $s$.  

\section{Generalizable Episodic Memory}
\subsection{Overview}
\label{overview}
In generalizable episodic memory (GEM), we use a parametric network $\mathcal{M}_\theta$ to represent the virtual memory table with parameter $\theta$, which is learned from tabular memory $\mathcal{M}$. To leverage the generalization ability of $\mathcal{M}_\theta$, we enhance the returns stored in the tabular memory by propagating value estimates from ${\mathcal{M}}_\theta$ and real returns from $\mathcal{M}$ along trajectories in the memory and take the best value over all possible rollouts, as depicted in Equation~(\ref{singlebp}). Generalizable memory $\mathcal{M}_\theta$ is trained by performing regression toward this enhanced target and is then used to guide policy learning as well as to build the new target for GEM's learning.
\par
A significant issue for the procedure above is the overestimation induced from taking the best value along the trajectory. Overestimated values are preserved when doing back-propagation along the trajectory and hinder learning efficiency. To mitigate this issue, we propose to use twin networks for the back-propagation of value estimates. This twin back-propagation process uses the double estimator for estimating the maximum along trajectories, which resembles the idea of Double Q-learning \citep{DoubleQ} and is illustrated in detail in Section~\ref{TBN}. It is well known that vanilla reinforcement learning algorithms with function approximation already has a tendency to overestimate \citep{DoubleQ, TD3}, making the overestimation reduction even more critical. To solve this challenge, we propose to use additional techniques to make the value estimation from $\mathcal{M}_\theta$ more conservative, which is illustrated in Section~\ref{conservative}.\par
A formal description for GEM algorithm is shown in Algorithm \ref{alg:amc}. In the following sections, we use $Q_\theta$ to represent $Q_{\mathcal{M}_\theta}$ for simplicity. Our implementation of GEM is available at \url{https://github.com/MouseHu/GEM}.\par
\begin{algorithm}[tb]
    \caption{Generalizable Episodic Memory}
    \label{alg:amc}
    \begin{algorithmic}
        \STATE Initialize episodic memory(critic) networks $Q^{(1)}_\theta, Q^{(2)}_\theta$, and actor network $\pi_\phi$ with parameters $\theta_{(1)},\theta_{(2)},\phi$
        \STATE Initialize targets $\theta_{(1)}'\leftarrow\theta_{(1)},\theta_{(2)}'\leftarrow\theta_{(2)},\phi'\leftarrow\phi$
        \STATE Initialize episodic memory $\mathcal{M}$

        \FOR{$t=1,\dots,T$}
            \STATE Select action with noise $a \sim \pi(s) + \mathcal{N}(0, \sigma)$\;
            \STATE Observe reward $r$ and new state $s'$\;
            \STATE Store transition tuple $(s, a, r, s')$ in $\mathcal{M}$\;
            \FOR{ $i\in\{1,2\}$ }
                \STATE Sample $N$ transitions $(s_t, a_t, r_t, s'_t,R_t^{(i)})$ from $\mathcal{M}$\;
                \STATE Update $\theta_{(i)} \leftarrow \min_{\theta_{(i)}} \sum(R_t^{(i)}-Q_{\theta}^{(i)}(s_t,a_t))^2$\;
            \ENDFOR
            \IF{$t \mbox{ mod } u=0$}
                \STATE $\phi' \leftarrow \tau\phi + (1-\tau)\phi'$\;
                \STATE $\theta_{(i)}' \leftarrow \tau\theta_{(i)} + (1-\tau)\theta_{(i)}'$\;
                \STATE \textsc{Update Memory}()\;
            \ENDIF
            \IF{$t \mbox{ mod } p=0$} 
                \STATE Update $\phi$ by the deterministic policy gradient\;
                \STATE $\nabla_\phi J(\phi) = \nabla_a Q_{\theta_1}(s, a)|_{a=\pi_\phi(s)}\nabla_\phi\pi_{\phi(s)}$\;
            \ENDIF
        \ENDFOR
    \end{algorithmic}

\end{algorithm}

\begin{algorithm}[tb]
    \caption{Update Memory}
    \label{alg:updatememory}
    \begin{algorithmic}
        \FOR{trajectory $\tau$ in tabular memory $\mathcal{M}$}
        \FOR{$s_t,a_t,r_t,s_{t+1}$ in \textsc{reversed}($\tau$)}
        \STATE $\tilde{a}_{t+1} \sim \pi_{\phi'}(s_{t+1})+\text{clip}(\mathcal{N}(0,\tilde{\sigma}),-c,c)$\;
        \STATE Compute $Q_{\theta'_{(1,2)}}(s_{t+1},\tilde{a}_{t+1})$
        \STATE Compute $V_{t,h}^{(1,2)}$ with Equation~(\ref{update1}) for $h$ in $0:T-t$
        \STATE Compute $R_{t}^{(1,2)}$ with Equation~(\ref{update2}) and save into buffer $\mathcal{M}$
        \ENDFOR
        \ENDFOR
    \end{algorithmic}
\end{algorithm}
\subsection{Generalizable Episodic Memory}
Traditional discrete episodic memory is stored in a lookup table, learned as in Equation~(\ref{equ:mfec}) and used as in Equation~(\ref{equ:mfec2}). This kind of methods does not consider generalization when learning values and enjoy little generalization ability from non-parametric nearest-neighbor search with random projections during execution. To enable the generalizability of such episodic memory, we use the parametric network $\mathcal{M}_\theta$ to learn and store values. As stated in Section~\ref{overview}, this parametric memory is learned by doing regression toward the best returns $R_t$ starting from the state-action pair $(s_t,a_t)$:
\begin{equation}
    \mathcal{L}(Q_\theta)=\EX_{(s_t,a_t,R_t)\sim \mathcal{M}} (Q_\theta(s_t,a_t)-R_t)^2.
\end{equation}
where $R_t$ is computed from implicit planning over both real returns in tabular memory $\mathcal{M}$ and estimates in target generalizable memory table $\mathcal{M}_{\theta}'$, as depicted in the next section. This learned virtual memory table is then used for both policy learning and building new target.

\subsection{Implicit Memory-Based Planning}
To leverage the analogical reasoning ability of our parametric memory, GEM conducts implicit memory-based planning to estimate the value for the best possible rollout for each state-action pair. At each step, GEM compares the best return so far along the trajectory with the value estimates from $Q_\theta$ and takes the maximum between them. $Q_\theta$  is generalized from similar experiences and can be regraded as the value estimates for counterfactual trajectories. This procedure is conducted recursively from the last step to the first step along the trajectory, forming an implicit planning scheme within episodic memory to aggregate experiences along and across trajectories. The overall back-propagation process can be written in the following form:
\begin{equation}
    \label{singlebp}
    R_t=\left\{
    \begin{aligned}
        &r_t+\gamma \max(R_{t+1}, Q_\theta(s_{t+1},a_{t+1})) &\mbox{if } t<T, \\
        &r_t & \mbox{if } t=T, \\
    \end{aligned}
    \right.
\end{equation}
where $t$ denotes steps along the trajectory and $T$ the episode length.
Further, the back-propagation process in Equation~(\ref{singlebp}) can be unrolled and rewritten as follows:

\begin{align}
    \label{update0}
    V_{t,h}&=
    \left\{
        \begin{aligned}
            &r_t+\gamma V_{t+1,h-1} & \mbox{  if } h>0, \\
            &Q_\theta(s_t,a_t) & \mbox{  if } h=0, \\
        \end{aligned}
        \right. \notag\\
    R_t&=V_{t,h^*},h^*=\argmax_{h>0} V_{t,h}, 
\end{align}
where $h$ denotes different length of rollout steps, and we define $Q_\theta(s_t,a_t)=0$ and $V_{t,h}=0$ for $t>T$.\par
\subsection{Twin Back-Propagation Process}
\label{TBN}
In this section, we describe our novel twin back-propagation process to reduce trajectory-induced overestimation.\par
As mentioned in Section~\ref{overview}, using Equation~(\ref{update0}) for planning directly can lead to severe overestimation, even if $Q_\theta$ is unbiased. Formally, for any unbiased set of estimators $\tilde{Q}_{h}(s,a)=Q_{h}(s,a)+U_h(s,a)$, where $Q_{h}(s,a)$ is the true value and $U_h(s,a)$ is a set of independent, zero-mean random noise, we have
\begin{equation}
    \EX_U\left[\max_h \tilde{Q}_{h}(s,a)\right] \geq \max_h Q_{h}(s,a),
\end{equation}
which can be derived directly from Jensen's inequality. \par
Thus trajectory-augmented values have a tendency to overestimate and this overestimation bias can hurt performance greatly. We propose to use twin networks, $Q^{(1)}, Q^{(2)}$, to form a double estimator for estimating the best value $R_t$ in the value back-propagation in Equation~(\ref{update0}). One network is used to estimate the length of rollouts with the maximum returns along a trajectory ($h^*$ in Equation~(\ref{update0})), while the other estimates how large the return is following this estimated best rollout ($V_{t,h^*}$ in Equation~(\ref{update0})). Formally, the proposed twin back-propagation process (TBP) is given by \footnote{In Equation~(\ref{update1}) \& (\ref{update2}),  index  $(1,2)$ represents it can be either $(1)$ or $(2)$, while reversed index $(2,1)$ refers to the other network's estimation, i.e. represents $(2)$ when $(1,2)$ refers to $(1)$ and vice versa.}:
\begin{align}
    \label{update1}
    V^{(1,2)}_{t,h}&=
    \left\{
        \begin{aligned}
            r_t+\gamma V^{(1,2)}_{t+1,h-1} & \mbox{  if } h>0, \\
            Q^{(1,2)}_\theta(s_t,a_t) & \mbox{  if } h=0, \\
        \end{aligned}
     \right. \\
    \label{update2}
    R_t^{(1,2)}&=V_{t,h^*_{(1,2)}}^{(2,1)},h^*_{(1,2)}=\argmax_{h>0} V^{(1,2)}_{t,h}.
\end{align}
And each $Q$-network is updated by
\begin{align}
    \mathcal{L}(\theta_{(1,2)})&=\EX_{s_t,a_t,R_t\sim\mathcal{M}}\left(Q_{\theta}^{(1,2)}(s_t,a_t)-R^{(1,2)}_t\right)^2. \notag
\end{align}

Note that this twin mechanism is different from double Q-learning, where we take the maximum over timesteps while double Q-learning takes the maximum over actions.\par
Formal analysis in Section~\ref{nonoverestimation} shows that this twin back-propagation process does not overestimate the true maximum expectation value, given unbiased $Q_\theta$. As stated in \citet{DoubleQ} and \citet{TD3}, while this update rule may introduce an underestimation bias, it is much better than overestimation, which can be propagated along trajectories. 
\subsection{Conservative Estimation on Single Step}
\label{conservative}
The objective in Equation~(\ref{update2}) may still prone to overestimation since $Q^{(1,2)}_\theta$ is not necessarily unbiased, and it is well known that deterministic policy gradient has a tendency to overestimate \citep{TD3}. To address this issue, we further augment each $Q^{(1,2)}_\theta$ functions with two independent networks $Q_A,Q_B$, and use clipped double Q-learning objective as in \citet{TD3} for the estimation at each step:
\begin{equation}
    Q^{(1,2)}(s,a) = \min_{A,B}\left(Q^{(1,2)}_A(s,a), Q^{(1,2)}_B(s,a)\right).   
\end{equation}

Finally, we propose the following asymmetric objective to penalize overestimated approximation error,
\begin{equation}
    \mathcal{L}(\theta_{(1,2)}) = \EX_{s_t,a_t,R_t\sim\mathcal{M}}[ (\delta_t)_+^2+\alpha(-\delta_t)_+^2],
\end{equation}
where $\delta_t=Q^{(1,2)}(s_t,a_t)-R^{(1,2)}_t$ is the regression error and $(\cdot)_+=\max(\cdot,0)$. $\alpha$ is the hyperparameter to control the degree of asymmetry.\par

\section{Theoretical Analysis}
In this section, we aim to establish a theoretical characterization of our proposed approach through several aspects. We begin by showing an important property that the twin back-propagation process itself does not have an incentive to overestimate values, the same property as guaranteed by double Q-learning. In addition, we prove that our approach guarantees to converge to the optimal solution in deterministic environments. Moreover, in stochastic environments, the performance could be bounded by a dependency term regrading the environment stochasticity.

\subsection{Non-Overestimation Property}
\label{nonoverestimation}

We first investigate the algorithmic property of the twin back-propagation process in terms of \textit{value estimation bias}, which is an essential concept for value-based methods \citep{thrun1993issues, DoubleQ, TD3}. Theorem \ref{theorem1} indicates that our method would not overestimate the real maximum value in expectation.

\begin{theorem}
    \label{theorem1}
    Given unbiased and independent estimators $\tilde{Q}^{(1,2)}_\theta(s_{t+h},a_{t+h})=Q^\pi(s_{t+h},a_{t+h})+\epsilon^{(1,2)}_h$, Equation~(\ref{update2}) will not overestimate the true objective, i.e.
    \begin{equation}
        \EX_{\tau,\epsilon} \left[R_t^{(1,2)}(s_t)\right]\leq  \EX_{\tau}\left[\max_{0\leq h < T-t}Q^\pi_{t,h}(s_t,a_t)\right],
    \end{equation}
    where $Q^\pi_{t,h}(s,a)=$
    \begin{equation}
        \sum_{i=0}^{h}\gamma^i r_{t+i}+
        \left\{\begin{aligned}
        &\gamma^{h+1} Q^\pi(s_{t+h+1},a_{t+h+1}),& \text{if } h<T-t,\\
        &0,& \text{if } h=T-t.
    \end{aligned}
    \right.
    \end{equation}
    and $\tau=\{(s_t,a_t,r_t,s_{t+1})_{t=1,\cdots,T}\}$ is a trajectory.
\end{theorem}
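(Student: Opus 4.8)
The plan is to reduce the twin back-propagation estimate to a comparison between two quantities: the estimated best rollout length $h^*_{(1,2)}$ chosen by one network, and the value reported by the other network at that length. The key structural fact is that for each fixed $h$, the unrolled quantity $V^{(1,2)}_{t,h}$ defined in Equation~(\ref{update1}) is itself an unbiased estimator of $Q^\pi_{t,h}(s_t,a_t)$, because $V^{(1,2)}_{t,h}$ is just a telescoping sum of the (deterministic-given-$\tau$) rewards $r_{t},\dots,r_{t+h-1}$ plus $\gamma^h \tilde Q^{(1,2)}_\theta(s_{t+h},a_{t+h})$, and the noise $\epsilon^{(1,2)}_h$ is zero-mean; hence $\EX_\epsilon[V^{(1,2)}_{t,h}\mid\tau] = Q^\pi_{t,h}(s_t,a_t)$ (here I am treating the $h=0$ case as the bootstrap term and the index alignment as in the theorem statement). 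I would make this reduction explicit first, so that the rest of the argument is purely about the double estimator.

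Next I would invoke the standard double-estimator inequality, adapted from the analysis of Double Q-learning in \citet{DoubleQ}. The point is that $R^{(1,2)}_t = V^{(2,1)}_{t,h^*_{(1,2)}}$ where $h^*_{(1,2)} = \argmax_h V^{(1,2)}_{t,h}$, i.e. the argmax is taken using the first estimator and the value is read off from the \emph{independent} second estimator. Conditioning on $\tau$ and on the first estimator's noise $\epsilon^{(1,2)}$ (which fixes $h^*_{(1,2)}$), and then taking expectation over the independent noise $\epsilon^{(2,1)}$, gives
\begin{equation*}
    \EX_{\epsilon}\!\left[V^{(2,1)}_{t,h^*_{(1,2)}}\,\middle|\,\tau\right]
    = \EX_{\epsilon^{(1,2)}}\!\left[\,Q^\pi_{t,h^*_{(1,2)}}(s_t,a_t)\,\middle|\,\tau\right]
    \leq \max_{0\leq h<T-t} Q^\pi_{t,h}(s_t,a_t),
\end{equation*}
since for any (random but not identically the maximizer) index $h^*_{(1,2)}$ the value $Q^\pi_{t,h^*_{(1,2)}}$ is pointwise at most the maximum, and the maximum is non-random given $\tau$. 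This is exactly the mechanism by which the double estimator avoids the Jensen-type upward bias that a single estimator incurs via $\EX[\max_h \tilde Q_h]\geq \max_h Q_h$.

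Finally I would take the outer expectation over $\tau$ and apply the tower property to chain the two conditional statements: $\EX_{\tau,\epsilon}[R^{(1,2)}_t] = \EX_\tau \EX_\epsilon[V^{(2,1)}_{t,h^*_{(1,2)}}\mid\tau] \leq \EX_\tau[\max_h Q^\pi_{t,h}(s_t,a_t)]$, which is the claimed bound. The routine bookkeeping — handling the boundary term at $h=T-t$, verifying the index shift between the ``$0\le h< T-t$'' range in the theorem and the ``$h>0$'' argmax in Equation~(\ref{update2}), and being careful that $V^{(1,2)}_{t,h}$ for different $h$ share the \emph{same} reward prefix but independent bootstrap noise — I would relegate to the details.

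The main obstacle I anticipate is the independence accounting in the double-estimator step: one must argue that, conditional on $\tau$, the selection index $h^*_{(1,2)}$ is a function of $\epsilon^{(1,2)}$ only and is therefore independent of the noise $\epsilon^{(2,1)}$ used to evaluate $V^{(2,1)}_{t,h^*_{(1,2)}}$, so that $\EX_{\epsilon^{(2,1)}}[V^{(2,1)}_{t,h^*_{(1,2)}}\mid \tau,\epsilon^{(1,2)}] = Q^\pi_{t,h^*_{(1,2)}}(s_t,a_t)$ holds even though $h^*_{(1,2)}$ is random. This is precisely the subtle point in the original Double Q-learning proof, and it relies on the stated hypothesis that the two estimator families are independent; the rest of the argument is then a short application of ``a value at a data-chosen index is at most the maximum'' together with linearity and the tower rule.
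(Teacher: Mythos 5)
Your proposal is correct and follows essentially the same route as the paper's proof: unroll $R^{(1,2)}_t$ as rewards plus the cross-network bootstrap at the selected index $h^*_{(1,2)}$, use the independence of $h^*_{(1,2)}$ from $\epsilon^{(2,1)}$ to conclude $\EX_\epsilon[R^{(1,2)}_t \mid \tau] = Q^\pi_{t,h^*_{(1,2)}}(s_t,a_t)$, and then bound the data-chosen index by the maximum before taking the expectation over $\tau$. The independence/conditioning point you flag as the main obstacle is exactly the step the paper uses (albeit stated more tersely) when it asserts the cross-term has zero expectation.
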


The proof of Theorem \ref{theorem1} is deferred to Appendix~\ref{proofs}. As revealed by Theorem \ref{theorem1}, the twin back-propagation process maintains the same non-overestimation nature of double Q-learning \citep{DoubleQ}, which ensures the reliability of our proposed value propagation mechanism.

\subsection{Convergence Property}

In addition to the statistical property of value estimation, we also analyze the convergence behavior of GEM. Following the same environment assumptions as related work \citep{MFEC, associative}, We first derive the convergence guarantee of GEM in deterministic scenarios as the following statement.

\begin{theorem}
    \leavevmode
    \label{theorem2}
    In a finite MDP with a discount factor $\gamma<1$, the tabular parameterization of Algorithm \ref{alg:amc} would converge to $Q^*$ w.p.1 under the following conditions:
    \begin{enumerate}
        \item $\sum_t{\alpha_t(s,a)}=\infty,\sum_t{\alpha_t^2(s,a)}<\infty$
        \item The transition function of the given environment is fully deterministic, i.e., $P(s'|s,a)=\delta(s'=f(s,a))$ for some deterministic transition function $f$
    \end{enumerate}
    where $\alpha_t\in(0,1)$ denotes the scheduling of learning rates. A formal description for tabular parameterization of Algorithm \ref{alg:amc} is included in Appendix~\ref{tabular}.
\end{theorem}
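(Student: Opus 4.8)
\section*{Proof proposal for Theorem~\ref{theorem2}}

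The plan is to recognize the tabular update of Algorithm~\ref{alg:amc} as an asynchronous stochastic-approximation iteration and to invoke the classical stochastic-approximation convergence lemma for such iterations (the device behind convergence proofs of tabular $Q$-learning and of related episodic-memory methods \citep{associative}). Concretely, for each $(s,a)$ updated at step $t$ I would write $Q_{t+1}(s,a) = (1-\alpha_t(s,a))Q_t(s,a) + \alpha_t(s,a)\,R_t(s,a)$, where $R_t(s,a)$ is the GEM planning return of Equation~(\ref{singlebp}) evaluated along the sampled trajectory using the target tables, and $\alpha_t(s,a)=0$ for pairs not touched at step $t$. Subtracting $Q^*$ turns this into $\Delta_{t+1} = (1-\alpha_t)\Delta_t + \alpha_t F_t$ with $F_t(s,a) = R_t(s,a) - Q^*(s,a)$, so it remains to verify the lemma's hypotheses: (i) the stepsize conditions, which are exactly Condition~1; (ii) a conditional-mean pseudo-contraction $\norm{\EX[F_t\mid\mathcal{F}_t]}_\infty \le \gamma\norm{\Delta_t}_\infty + c_t$ with $c_t\to 0$; and (iii) bounded conditional variance, which is immediate because the MDP is finite and deterministic, so $R_t$ ranges over a bounded set and $\mathrm{Var}[F_t\mid\mathcal{F}_t]\le C(1+\norm{\Delta_t}_\infty)^2$.

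The crux is (ii), and this is exactly where determinism (Condition~2) is needed. Unrolling Equation~(\ref{singlebp}) along the sampled trajectory $(s,a)=(s_0,a_0),(s_1,a_1),\dots$ with $s_{i+1}=f(s_i,a_i)$ gives $R_t(s,a)=\max_{h\ge 1}\bigl[\sum_{i=0}^{h-1}\gamma^i r(s_i,a_i) + \gamma^h Q_t(s_h,a_h)\bigr]$. For the upper bound I would use the pointwise inequality $\sum_{i=0}^{h-1}\gamma^i r(s_i,a_i) + \gamma^h Q^*(s_h,a_h)\le Q^*(s,a)$, obtained by iterating $Q^*(s_i,a_i)\ge r(s_i,a_i)+\gamma Q^*(s_{i+1},a_{i+1})$; replacing $Q^*$ by $Q_t$ costs at most $\gamma^h\norm{\Delta_t}_\infty\le\gamma\norm{\Delta_t}_\infty$, hence $R_t(s,a)\le Q^*(s,a)+\gamma\norm{\Delta_t}_\infty$. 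For the lower bound I would argue that, since the exploration noise gives every finite action sequence positive probability and Condition~1 forces each $(s,a)$ to be visited infinitely often, infinitely often the trajectory sampled from $(s,a)$ follows an optimal continuation; along such a trajectory the $h$-term telescopes to $r(s,a)+\gamma\max_{a'}Q^*(s_1,a') = Q^*(s,a)$ up to $\gamma^h\norm{\Delta_t}_\infty$ plus a horizon-truncation term that vanishes (and is zero in the episodic/absorbing case). Combining the two bounds, and noting that the residual slack coming from not-yet-optimal trajectories can overwrite a given entry only finitely often before being corrected, yields the required $c_t\to 0$.

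Two bookkeeping points finish the argument. First, the periodic target refresh (the $t\bmod u=0$ branch together with \textsc{Update Memory}) only delays information by a bounded number of steps, which can be folded into $c_t$ since $\gamma<1$ and the state space is finite. Second, the twin cross-estimation $R^{(1,2)}_t = V^{(2,1)}_{t,h^*_{(1,2)}}$ must be treated: I would either follow the coupled stochastic-approximation argument behind double $Q$-learning \citep{DoubleQ}, showing that $\Delta^{(1)}_t$ and $\Delta^{(2)}_t$ both vanish because both tables share the unique fixed point $Q^*$ and the sandwich bound above holds for either greedy-length selector, or, if the tabular parameterization in Appendix~\ref{tabular} collapses the two heads, drop this step altogether. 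I expect part (ii) --- specifically the lower bound, i.e.\ showing that optimal value genuinely propagates backward through the \emph{sampled} trajectories so that $c_t\to 0$ rather than merely staying bounded --- to be the main obstacle, and the place where the deterministic-transition hypothesis is indispensable.
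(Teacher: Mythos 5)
Your skeleton (asynchronous stochastic approximation, error recursion $\Delta_{t+1}=(1-\alpha_t)\Delta_t+\alpha_t F_t$, the variance and step-size checks, and the upper bound obtained by iterating the pathwise Bellman inequality under determinism) matches the paper's proof, which verifies the hypotheses of the same lemma (Lemma~\ref{lemma1}). The genuine gap is exactly where you predicted it, but your proposed route does not close it and is not the route that works. You try to get the lower (contraction) bound by arguing that exploration samples an optimal continuation from $(s,a)$ infinitely often, and that the slack from suboptimal continuations ``can overwrite a given entry only finitely often before being corrected.'' That does not verify the lemma's hypothesis: the bound $\norm{\EX[F_t\mid P_t]}\le\gamma\norm{\Delta_t}+c_t$ with $c_t\to 0$ must hold at essentially every update, not along a subsequence of lucky trajectories, and suboptimal continuations are sampled with non-vanishing probability at every visit forever, so ``corrected finitely often'' is precisely the claim to be proven, not an ingredient. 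The paper's argument is much simpler and pathwise: the planning target of Equations~(\ref{update1})--(\ref{update2}) always dominates its $h=1$ term, the one-step bootstrap $r_t+\gamma\tilde Q(s_{t+1},\tilde a^*)$ with the \emph{greedy} action $\tilde a^*$ at the successor; since determinism makes $Q^*(s_t,a_t)=r_t+\gamma\max_{a'}Q^*(s_{t+1},a')$ hold along every sampled transition, one gets $\tilde R_t-Q^*(s_t,a_t)\ge\gamma\bigl(\tilde Q(s_{t+1},\tilde a^*)-Q^*(s_{t+1},a^*)\bigr)\ge-\gamma\norm{\Delta_t}$ for every trajectory, with no appeal to whether the rest of the sampled continuation is optimal. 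Note that this requires the bootstrap action to be the greedy one (as in the paper's proof and its deep analogue via the actor); in your unrolled formula you bootstrap with the on-trajectory behavior action $a_h$, under which semantics the bound genuinely fails and your difficulty is real rather than incidental.

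The second loose end is the twin estimator. The tabular parameterization (Algorithm~\ref{alg:tabular}) keeps two tables and cross-bootstraps ($R^{(1)}_t$ evaluates $V^{(2)}$ at $h^*_{(1)}$), so your fallback of ``collapsing the heads and dropping the step'' is not available; the coupled argument must actually be carried out. The paper does this by splitting $F_t=(\tilde R^{(1)}_t-Q^*)+(R^{(1)}_t-\tilde R^{(1)}_t)$, where $\tilde R^{(1)}$ is the single-estimator return: the first term is the $\gamma$-contraction above, and the second equals $\gamma^{h^*+1}\bigl(Q^{(2)}-Q^{(1)}\bigr)$ at the bootstrap state, which is absorbed into $c_t$ after a second application of Lemma~\ref{lemma1} (with effective step size $\alpha_t/2$, since each table is updated with probability one half) shows $\norm{Q^{(1)}-Q^{(2)}}\to 0$, mirroring double Q-learning \citep{DoubleQ}. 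As written, your proposal leaves both the contraction bound and this cross-estimator coupling open, and these two steps are the substance of the theorem.
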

The proof of Theorem \ref{theorem2} is extended based on \citet{DoubleQ}, and details are deferred to Appendix~\ref{proofs}.
Note that this theorem only applies to deterministic scenarios, which is a common assumption for memory-based algorithms \citep{MFEC, associative}. To establish a more precise characterization, we consider a more general class of MDPs, named near-deterministic MDPs, as stated in Definition \ref{def:near-deterministic}.
\begin{definition}\label{def:near-deterministic}
    We define $Q_{\text{max}}(s_0,a_0)$ as the maximum value possible to receive starting from $(s_0,a_0)$, i.e.,
    \begin{equation*}
        Q_{\text{max}}(s_0,a_0)\coloneqq \max_{
            \substack{ (s_1,\cdots,s_T),(a_1,\cdots,a_T)\\
            s_{i+1}\in supp(P(\cdot|s_i,a_i))}
    }{\sum_{t=0}^T \gamma^t r(s_t,a_t)}.
    \end{equation*}
    An MDP is said to be nearly-deterministic with parameter $\mu$, if $\forall s\in \mathcal{S}, a \in \mathcal{A},$
    \begin{align*}
        \qquad\qquad Q_{\text{max}}(s,a)\leq Q^*(s,a)+\mu,
    \end{align*}
    where $\mu$ is a dependency threshold to bound the stochasticity of environments.
\end{definition}

Based on the definition of near-deterministic MDPs, we formalize the performance guarantee of our approach as the following statements:
\begin{lemma}
    \label{theorem3}
    The value function $Q(s,a)$ learned by the tabular variant of Algorithm \ref{alg:amc} satisfies the following inequality:
    \begin{equation*}
        \forall s\in \mathcal{S},a\in\mathcal{A},Q^*(s,a)\leq Q(s,a) \leq Q_{\text{max}}(s,a),
    \end{equation*}
     w.p.1, under condition 1 in Theorem \ref{theorem2}.
\end{lemma}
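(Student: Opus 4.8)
The plan is to read the tabular GEM update as the stochastic-approximation recursion $Q_{t+1}(s,a)=(1-\alpha_t(s,a))Q_t(s,a)+\alpha_t(s,a)R_t$ and to prove the two inequalities separately, each time by identifying a $\gamma$-contraction in $\norm{\cdot}_\infty$ whose fixed point is the relevant bound and checking that the GEM target is dominated by (resp.\ dominates) iterates of that contraction. The conclusion then follows from the standard stochastic-approximation lemma used for the convergence of Q-learning and double Q-learning (Jaakkola et al.; Tsitsiklis), for which condition~1 of Theorem~\ref{theorem2} supplies the Robbins--Monro hypothesis and for which the remaining ingredient, infinite visitation of every $(s,a)$, comes with the tabular setting and is imported from the analysis of Theorem~\ref{theorem2}.

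\emph{Upper bound.} Introduce the optimistic Bellman operator $(\widehat{\mathcal{T}}Q)(s,a)=r(s,a)+\gamma\max_{s'\in \mathrm{supp}\,P(\cdot\mid s,a)}\max_{a'}Q(s',a')$, a $\gamma$-contraction whose fixed point is exactly $Q_{\text{max}}$. Unrolling Equations~(\ref{update1})--(\ref{update2}) shows every planning target has the form $R_t=\sum_{i=0}^{h-1}\gamma^i r_{t+i}+\gamma^{h}Q_\theta(s_{t+h},a_{t+h})$ evaluated along a \emph{real} trajectory in $\mathcal{M}$, so each successor it uses lies in $\mathrm{supp}\,P(\cdot\mid s_i,a_i)$; hence whenever the current estimates satisfy $Q_\theta\le Q_{\text{max}}$ one has $R_t\le Q_{\text{max}}(s_t,a_t)$, and for general estimates $R_t$ is dominated by a composition $\widehat{\mathcal{T}}^{\,h}$ of iterates applied to the current estimate. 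Monotonicity of $\widehat{\mathcal{T}}$ together with the stochastic-approximation lemma then gives $\limsup_t Q_t(s,a)\le Q_{\text{max}}(s,a)$ w.p.1. The twin cross-estimation is harmless in this direction, since $V^{(2)}_{t,h^*_{(1)}}$ is still a partial real return plus a discounted estimate and is bounded by $Q_{\text{max}}$ whichever index the other network picks.

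\emph{Lower bound.} Here I would first show, as in the asymptotic analysis of double Q-learning, that the two tabular estimators become consistent, $\norm{Q^{(1)}_\theta-Q^{(2)}_\theta}_\infty\to 0$, since both are regressed toward targets assembled from the same stored trajectories. Once they agree, $h^*_{(1)}=\argmax_{h>0}V^{(1)}_{t,h}$ also maximizes $V^{(2)}_{t,\cdot}$, so $R_t^{(1)}=V^{(2)}_{t,h^*_{(1)}}\ge V^{(2)}_{t,1}=r_t+\gamma Q^{(2)}_\theta(s_{t+1},a_{t+1})$ up to a vanishing term, and symmetrically for the other index. Combining this with the way the tabular variant aggregates repeated visits to $(s,a)$ over the (eventually exhaustive, by condition~1) set of continuation actions explored at $s_{t+1}$ shows that the target dominates an ordinary Bellman-optimality backup, $\EX[R_t^{(1,2)}\mid s_t,a_t]\ge (\mathcal{T}Q_\theta)(s_t,a_t)-o(1)$, with $\mathcal{T}$ the $\gamma$-contraction fixed by $Q^*$. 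Monotonicity of $\mathcal{T}$ and the stochastic-approximation lemma, now used as a stochastic super-solution bound, then yield $\liminf_t Q_t(s,a)\ge Q^*(s,a)$ w.p.1, which with the upper bound is the claim.

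\emph{Main obstacle.} The delicate part is the lower bound, and in particular the interplay between the twin back-propagation and the Bellman-backup lower estimate: unlike the upper bound, which does not care which network selects $h^*$, the inequality $R_t^{(1)}\ge V^{(2)}_{t,1}$ is only available once $Q^{(1)}_\theta$ and $Q^{(2)}_\theta$ have become consistent, so the consistency claim and the domination-of-a-Bellman-backup claim must be run jointly for the two estimators, mirroring the coupled induction in \citet{DoubleQ}. A second point to handle with care is that the implicit aggregation over continuation actions reproduces $\max_{a'}$ only under infinite exploration, which is precisely where the tabular parameterization and the infinite-visitation consequence of condition~1 enter. The rest is routine: the unrolling of the recursion, the contraction and monotonicity of $\widehat{\mathcal{T}}$ and $\mathcal{T}$, and the boundedness and conditional-variance conditions required by the stochastic-approximation lemma, all immediate from finiteness of the MDP and boundedness of the rewards.
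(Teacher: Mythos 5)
Your overall skeleton is the same as the paper's: bound the one-sided deviations $(Q^*-Q^{(1,2)})_+$ and $(Q^{(1,2)}-Q_{\text{max}})_+$, show each per-update target satisfies a $\gamma$-contraction-type inequality in $\norm{\cdot}_\infty$, and invoke the stochastic-approximation lemma used for double Q-learning, with the twin cross-evaluation handled as a vanishing correction. Your upper bound is essentially the paper's argument (the rollout is along a stored, hence realizable, trajectory, so $\sum_{i=0}^{h^*}\gamma^i r_{t+i}+\gamma^{h^*+1}Q_\theta \le Q_{\text{max}}(s_t,a_t)+\gamma\norm{(Q_\theta-Q_{\text{max}})_+}$), merely rephrased through an optimistic Bellman operator, and your observation that the cross-index evaluation is harmless there is correct.

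The genuine gap is in your lower bound, specifically in how the maximization over next actions is supposed to arise. You claim $\EX[R^{(1,2)}_t\mid s_t,a_t]\ge(\mathcal{T}Q_\theta)(s_t,a_t)-o(1)$ because "the set of continuation actions explored at $s_{t+1}$ is eventually exhaustive." First, condition 1 of Theorem \ref{theorem2} is only the Robbins--Monro step-size condition; it does not give infinite visitation of all actions, so you are importing an assumption that is not available. Second, even granting infinite exploration, visiting all continuation actions over time does not make any single update's target dominate $r_t+\gamma\max_{a'}Q_\theta(s_{t+1},a')$ in conditional expectation: each stored transition bootstraps one continuation, and the maximum in Equation~(\ref{update2}) is over rollout lengths $h$, not over actions at $s_{t+1}$. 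The mechanism the algorithm (and the paper's proof) actually uses is different: the $h$-step bootstrap is evaluated at the greedy/target-policy action $\tilde{a}^*$, so already the $h=1$ component gives $\tilde{R}^{(1)}_t\ge r_t+\gamma \tilde{Q}^{(1)}(s_{t+1},\tilde{a}^*)\ge r_t+\gamma \tilde{Q}^{(1)}(s_{t+1},a^*)$, hence $Q^*(s_t,a_t)-\tilde{R}^{(1)}_t\le\gamma\norm{(Q^*-Q^{(1)})_+}$ in conditional expectation with no exploration argument needed. Relatedly, you do not need to prove $\norm{Q^{(1)}_\theta-Q^{(2)}_\theta}\to 0$ as a standalone prerequisite before any inequality is available: the discrepancy $R^{(1)}_t-\tilde{R}^{(1)}_t$ enters exactly as the $c_t\to 0$ term allowed in the third condition of the auxiliary lemma, with $\norm{Q^{(1)}-Q^{(2)}}\to 0$ established by the same coupled argument as in the proof of Theorem \ref{theorem2}; your "joint induction" instinct is right, but as written your lower bound leans on an exploration-based reconstruction of $\max_{a'}$ that neither matches the algorithm nor yields the required per-update inequality.
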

\begin{proof}[Proof Sketch]
    We only need to show that $\norm{(Q-Q^*)_+}_\infty$ is a $\gamma$-contraction and also $\norm{(Q_{\text{max}}-Q)_+}_\infty$. The rest of the proof is similar to Theorem \ref{theorem2}.
\end{proof}

\begin{theorem} \label{thm-near-deterministic}
    For a nearly-deterministic environment with factor $\mu$, GEM's performance can be bounded w.p.1 by 
    $$V^{\tilde{\pi}}(s)\geq V^*(s)-\frac{2\mu}{1-\gamma},\forall s \in \mathcal{S}.$$
\end{theorem}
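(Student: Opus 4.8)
The plan is to reduce this statement to the classical performance-loss bound for acting greedily with respect to an approximately optimal value function, using the sandwich estimate already established in Lemma~\ref{theorem3} as the accuracy input. First I would combine Lemma~\ref{theorem3} with Definition~\ref{def:near-deterministic}: the lemma gives $Q^*(s,a)\le Q(s,a)\le Q_{\text{max}}(s,a)$ w.p.1, while near-determinism with factor $\mu$ gives $Q_{\text{max}}(s,a)\le Q^*(s,a)+\mu$. Chaining these yields $\norm{Q-Q^*}_\infty\le\mu$ w.p.1, i.e.\ the value function learned by the tabular variant of Algorithm~\ref{alg:amc} is uniformly $\mu$-close to $Q^*$. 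From here every remaining step is deterministic, so the ``w.p.1'' qualifier is inherited solely from Lemma~\ref{theorem3}.

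Next I would take $\tilde\pi$ to be the deterministic greedy policy induced by the learned $Q$, namely $\tilde\pi(s)\in\argmax_a Q(s,a)$, consistent with the tabular parameterization described in Appendix~\ref{tabular}. Fixing a state $s$ and writing $a=\tilde\pi(s)$, $a^*=\pi^*(s)$, the $\mu$-closeness together with greediness of $\tilde\pi$ gives
\begin{equation*}
Q^*(s,a)\ge Q(s,a)-\mu\ge Q(s,a^*)-\mu\ge Q^*(s,a^*)-2\mu=V^*(s)-2\mu,
\end{equation*}
so the action chosen by $\tilde\pi$ at $s$ is at most $2\mu$-suboptimal in terms of $Q^*$.

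Then I would propagate this one-step suboptimality through the Bellman equation for $V^{\tilde\pi}$. Since $V^{\tilde\pi}(s)=r(s,\tilde\pi(s))+\gamma\,\EX_{s'\sim P(\cdot|s,\tilde\pi(s))}[V^{\tilde\pi}(s')]$ and likewise for $Q^*(s,\tilde\pi(s))$, for every $s$ we get
\begin{align*}
V^*(s)-V^{\tilde\pi}(s)
&=\bigl(V^*(s)-Q^*(s,\tilde\pi(s))\bigr)+\bigl(Q^*(s,\tilde\pi(s))-V^{\tilde\pi}(s)\bigr)\\
&\le 2\mu+\gamma\,\EX_{s'\sim P(\cdot|s,\tilde\pi(s))}\!\left[V^*(s')-V^{\tilde\pi}(s')\right].
\end{align*}
Taking the supremum over $s$ on both sides and solving the resulting self-referential inequality gives $\norm{V^*-V^{\tilde\pi}}_\infty\le\frac{2\mu}{1-\gamma}$, which is exactly the claimed bound.

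The main obstacle I anticipate is not the contraction argument, which is routine, but pinning down the precise meaning of ``GEM's performance'': one must verify that in the tabular variant the executed policy is genuinely the greedy policy with respect to the converged (or sandwiched) $Q$ rather than a separately parameterized actor, and that the bound of Lemma~\ref{theorem3} applies at the relevant limit point. A secondary subtlety is being careful in the telescoping step that the expectation over $s'$ is taken with respect to the true environment dynamics under $\tilde\pi$ (not under any model estimate), but once $\norm{Q-Q^*}_\infty\le\mu$ is available this is standard.
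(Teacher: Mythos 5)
Your proposal is correct and follows essentially the same route as the paper: derive $\norm{Q-Q^*}_\infty\le\mu$ from the sandwich bound of Lemma~\ref{theorem3} together with the near-determinism condition, use greediness of $\tilde\pi$ with respect to the learned $Q$ to bound the one-step action suboptimality by $2\mu$, and close the argument with the Bellman recursion / $\gamma$-contraction to obtain $\frac{2\mu}{1-\gamma}$. Your write-up is in fact slightly more careful than the paper's (which implicitly takes a supremum when it writes $\gamma\bigl(V^*(s)-V^{\tilde\pi}(s)\bigr)$ at the same state), but the underlying decomposition is the same.
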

The complete proof of these statements are deferred to Appendix~\ref{proofs}. Theorem \ref{thm-near-deterministic} ensures that, our approach is applicable to near-deterministic environments as most real-world scenarios.

\section{Experiments}
Our experimental evaluation aims to answer the following questions: (1) How well does GEM perform on the continuous state and action space? (2) How well does GEM perform on discrete domains? (3) How effective is each part of GEM?

\begin{figure*}[htb]
    \centering
    \begin{subfigure}[c]{0.33\textwidth}
     \centering
     \includegraphics[width=\linewidth]{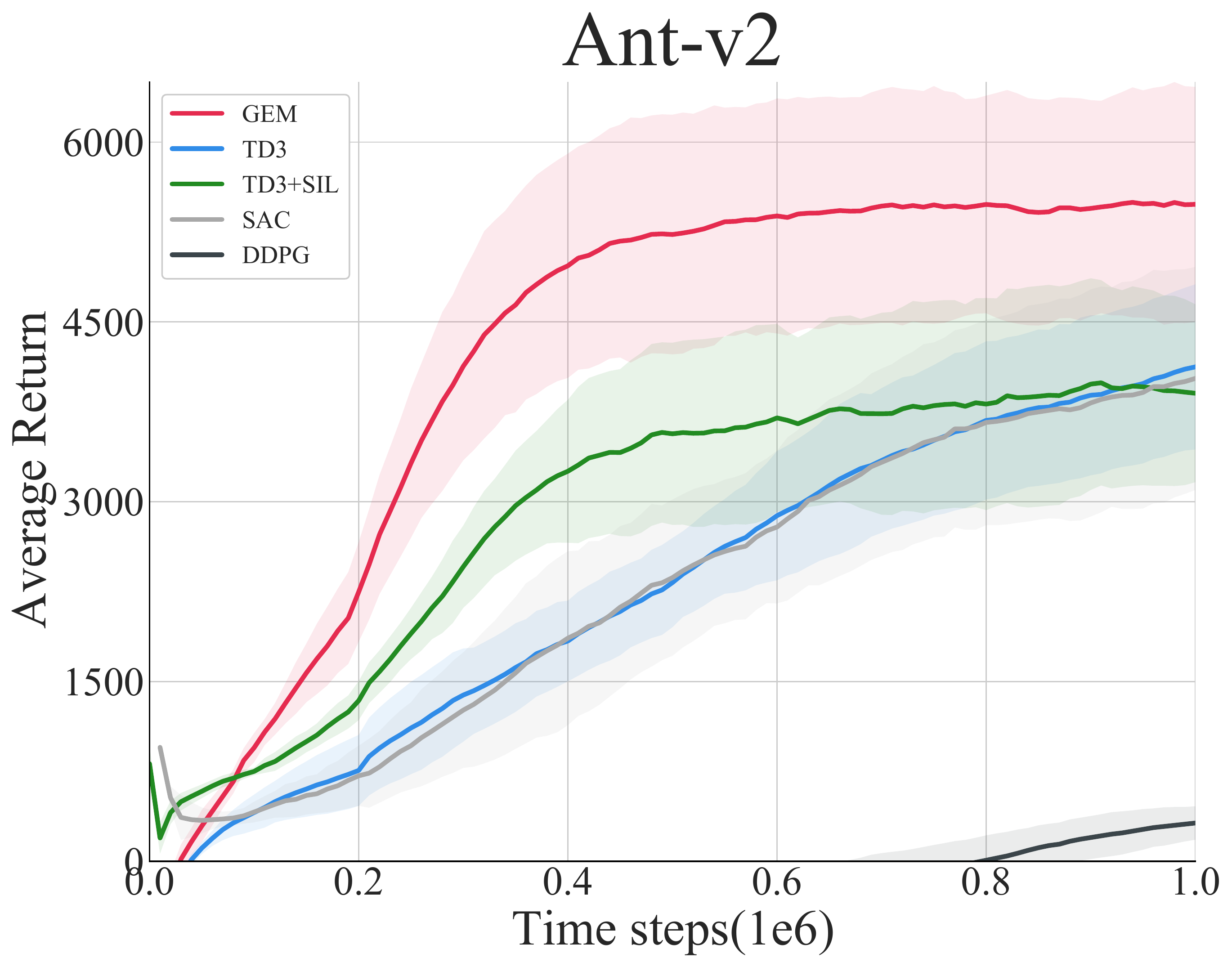}
    \end{subfigure}
    \begin{subfigure}[c]{0.33\textwidth}
     \centering
     \includegraphics[width=\linewidth]{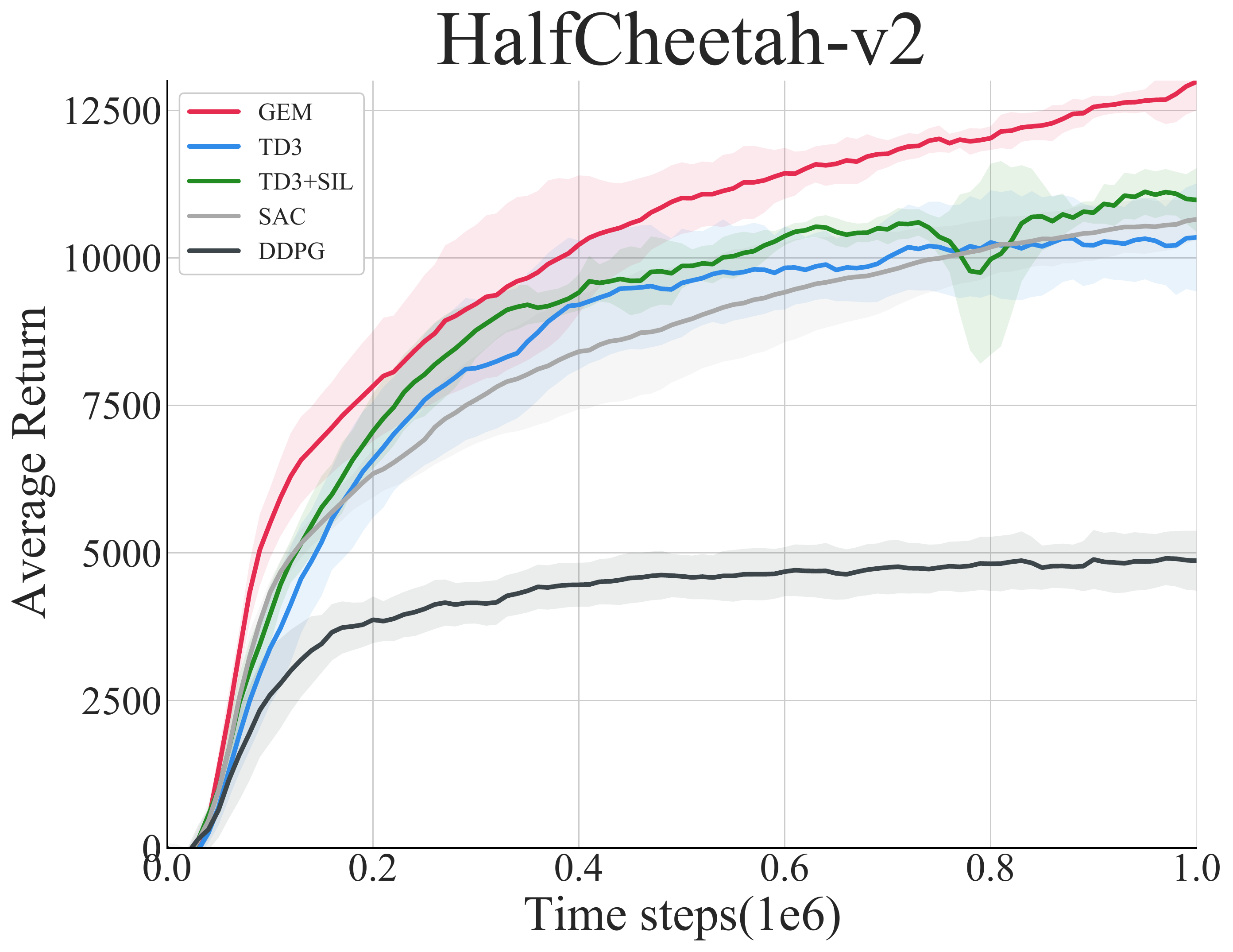}
    \end{subfigure}
    \begin{subfigure}[c]{0.33\textwidth}
     \centering
     \includegraphics[width=\linewidth]{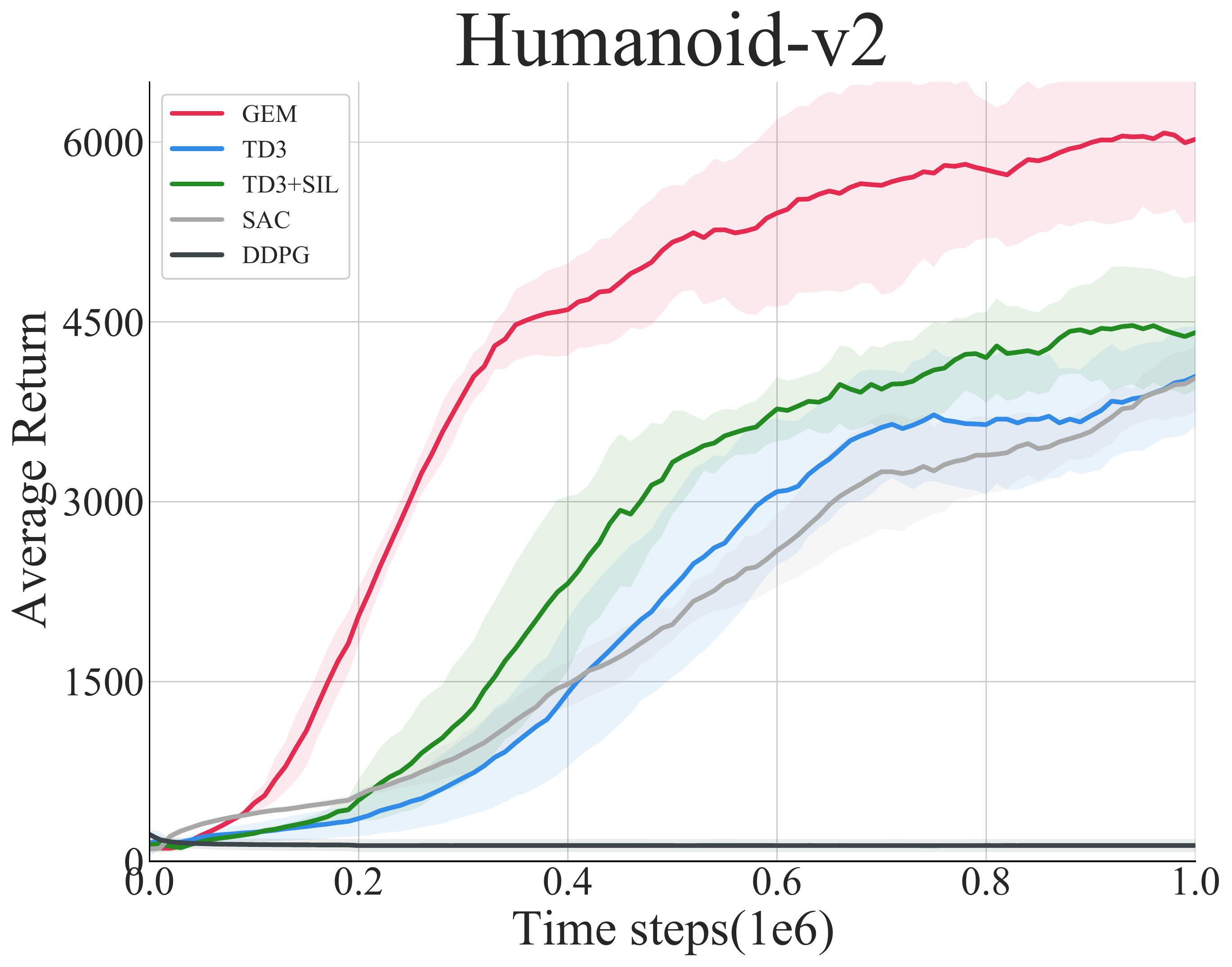}
    \end{subfigure}
    \begin{subfigure}[c]{0.33\textwidth}
     \centering
     \includegraphics[width=\linewidth]{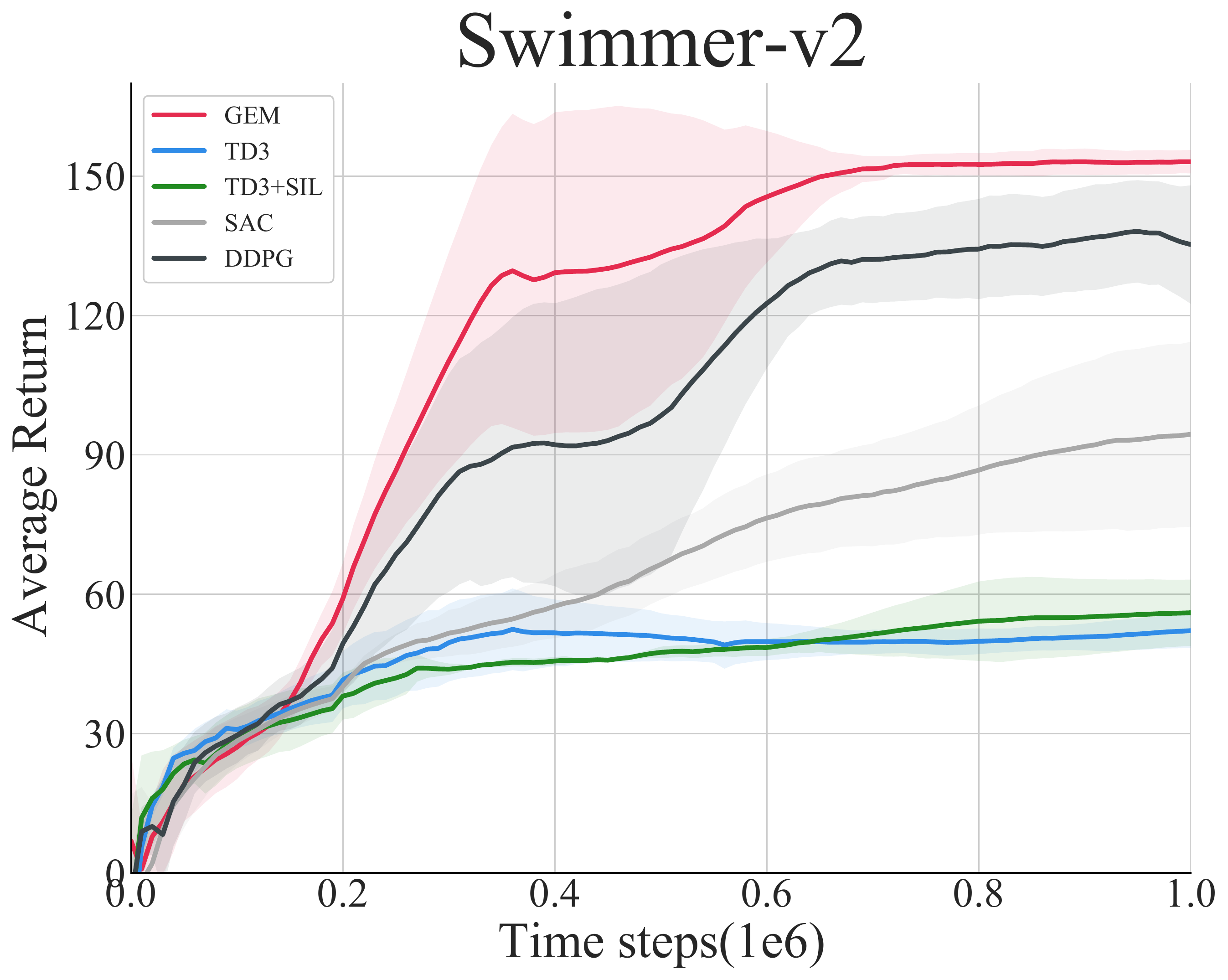}
    \end{subfigure}
    \begin{subfigure}[c]{0.33\textwidth}
     \centering
     \includegraphics[width=\linewidth]{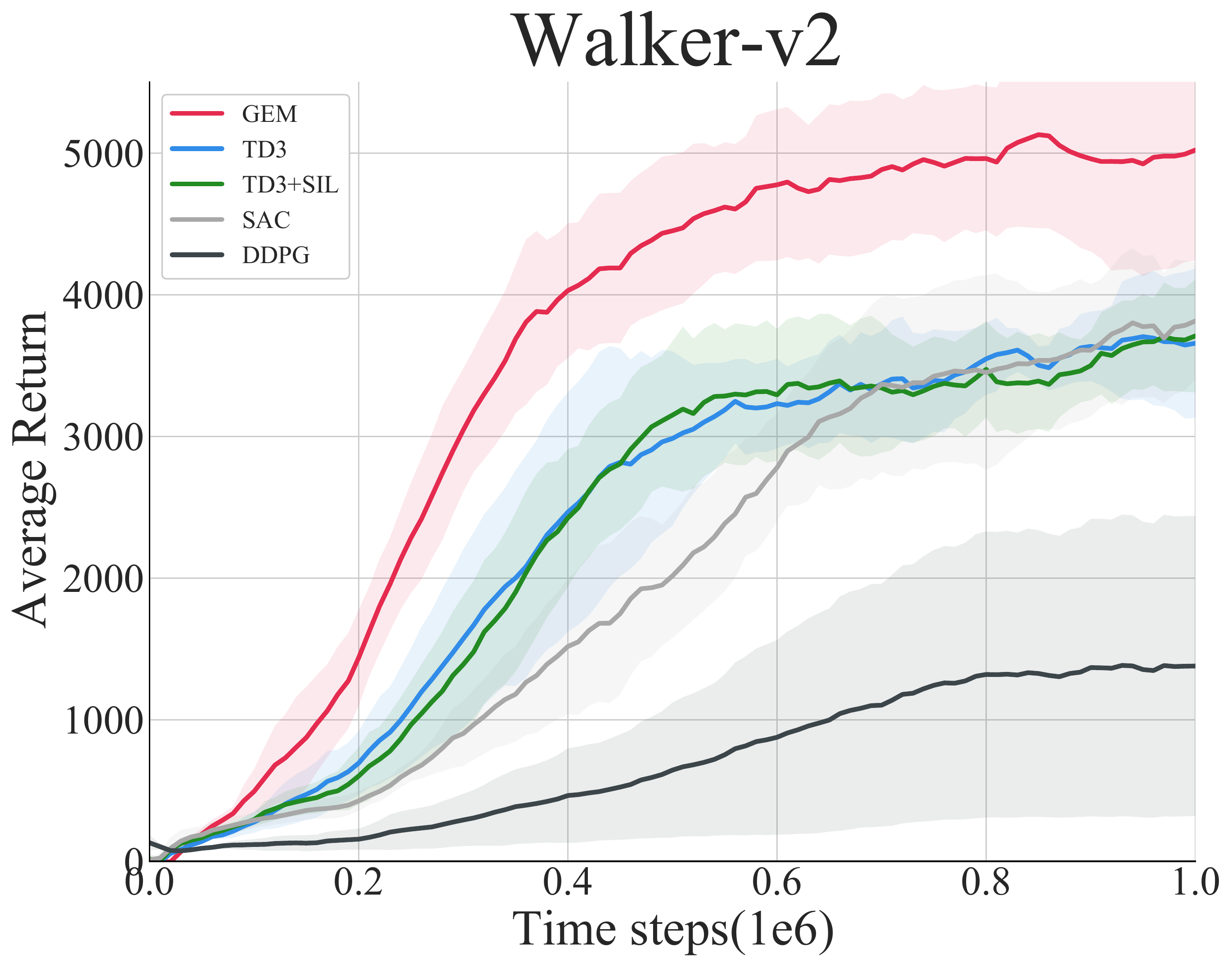}
    \end{subfigure}
    \begin{subfigure}[c]{0.33\textwidth}
     \centering
     \includegraphics[width=\linewidth]{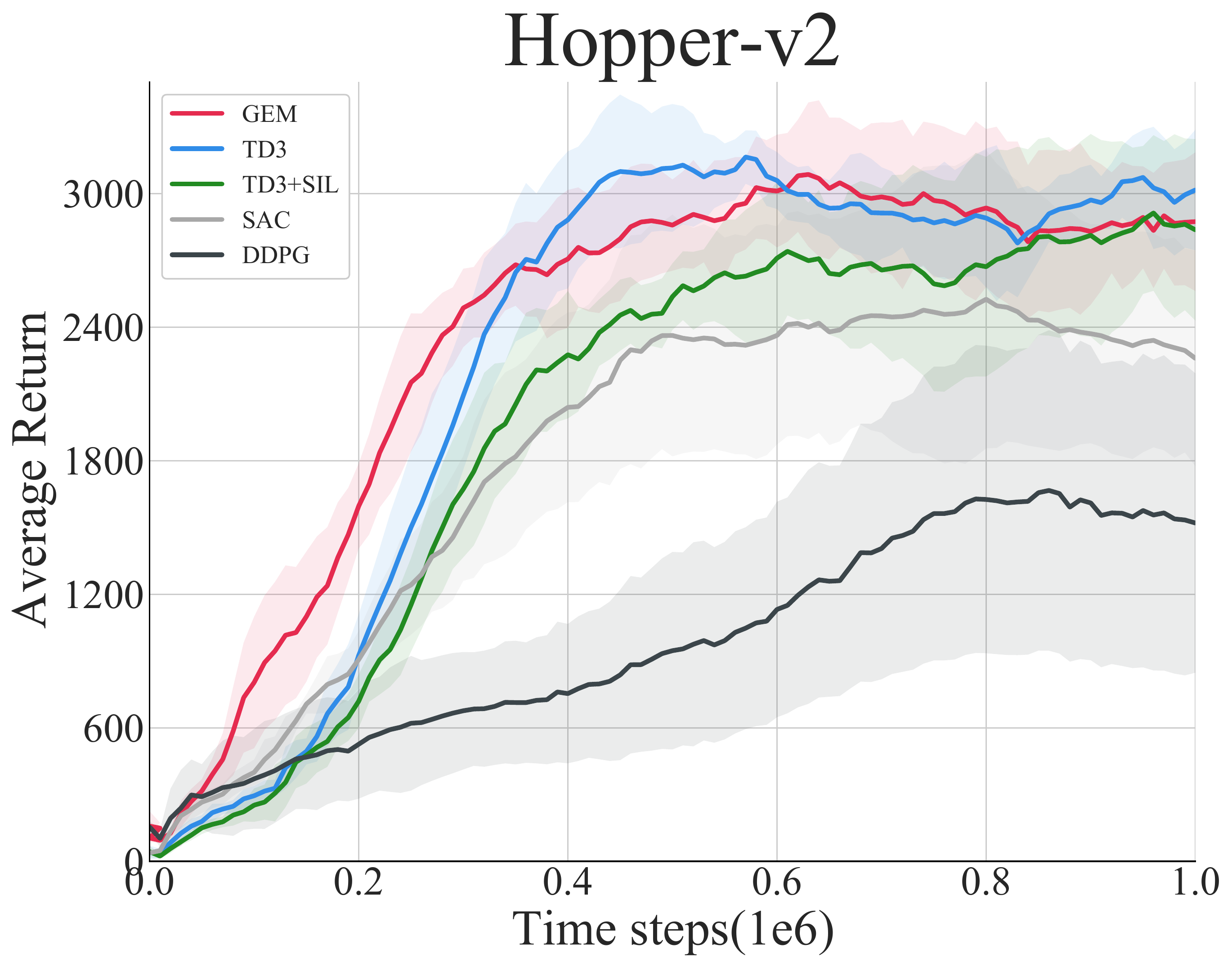}
    \end{subfigure}
    \caption{Learning curves on MuJoCo tasks, compared with baseline algorithms. The shaded region represents the standard deviation of the performance. Each curve is averaged over five seeds and is smoothed for visual clarity.}
    \label{performance}
   \end{figure*}

\subsection{Evaluation on Continuous Control Tasks}
We compare our method with the most popular model-free RL algorithms, including DDPG \citep{DDPG}, TD3 \citep{TD3} and SAC \citep{SAC}. Conventional episodic RL such as MFEC \citep{MFEC}, NEC \citep{NEC}, EMDQN \citep{EMDQN}, EVA \citep{EVA} and ERLAM \citep{associative} adopts discrete episodic memory that are only designed for discrete action space, thus we cannot directly compare with them. To compare with episodic memory-based approaches on the continuous domain, we instead choose self-imitation learning \citep[SIL;][]{SIL} in our experiments. It also aims to exploit past good experiences and share a similar idea with episodic control; thus, it can be regarded as a continuous version of episodic control. For a fair comparison, We use the same code base for all algorithms, and we combine the self-limitation learning objective with techniques in TD3 \citep{TD3}, rather than use the origin implementation, which combines with PPO \citep{PPO}.\par

We conduct experiments on the suite of MuJoCo tasks \citep{mujoco}, with OpenAI Gym interface \citep{brockman2016openai}. We truncate the maximum steps available for planning to reduce the overestimation bias. The memory update frequency $u$ is set to 100 with a smoothing coefficient $\tau=0.6$. The rest of the hyperparameters are mostly kept the same as in TD3 to ensure a fair comparison. The detailed hyperparameters used are listed in Appendix~\ref{hyperparameters}. \\

The learning curve on different continuous tasks is shown in Figure \ref{performance}. We report the performance of 1M steps, which is evaluated with $10$ rollouts for every $10000$ steps with deterministic policies. As the results suggest, our method significantly outperforms other baseline algorithms on most tasks. Only on Hopper, GEM is not the absolute best, but all the algorithms have similar performance. 

\subsection{Evaluation on Discrete Domains}
Though GEM is proposed to facilitate continuous control tasks, it also has the general applicability of boosting learning on discrete domains. To demonstrate this, we compare GEM with several advanced deep Q-learning algorithms for discrete action space, including DQN \citep{DQN}, DDQN \citep{van2016deep}, and Dueling DQN \citep{wang2016dueling}. We also include the clipped double DQN, which is adopted from a state-of-the-art algorithm of the continuous domain, TD3 \citep{TD3}. Most of these baseline algorithms focus on learning stability and the reduction of estimation bias. We evaluate all the above algorithms on 6 Atari games \citep{Atari}. We use hyper-parameters suggested in Rainbow \citep{rainbow}, and other hyper-parameters for our algorithm are kept the same as in the continuous domain. As shown in Figure~\ref{atari}, although not tailored for the discrete domain, GEM significantly outperforms baseline algorithms both in terms of sample efficiency and final performance.
\begin{figure*}[htb]
    \centering
    \includegraphics[width=15cm]{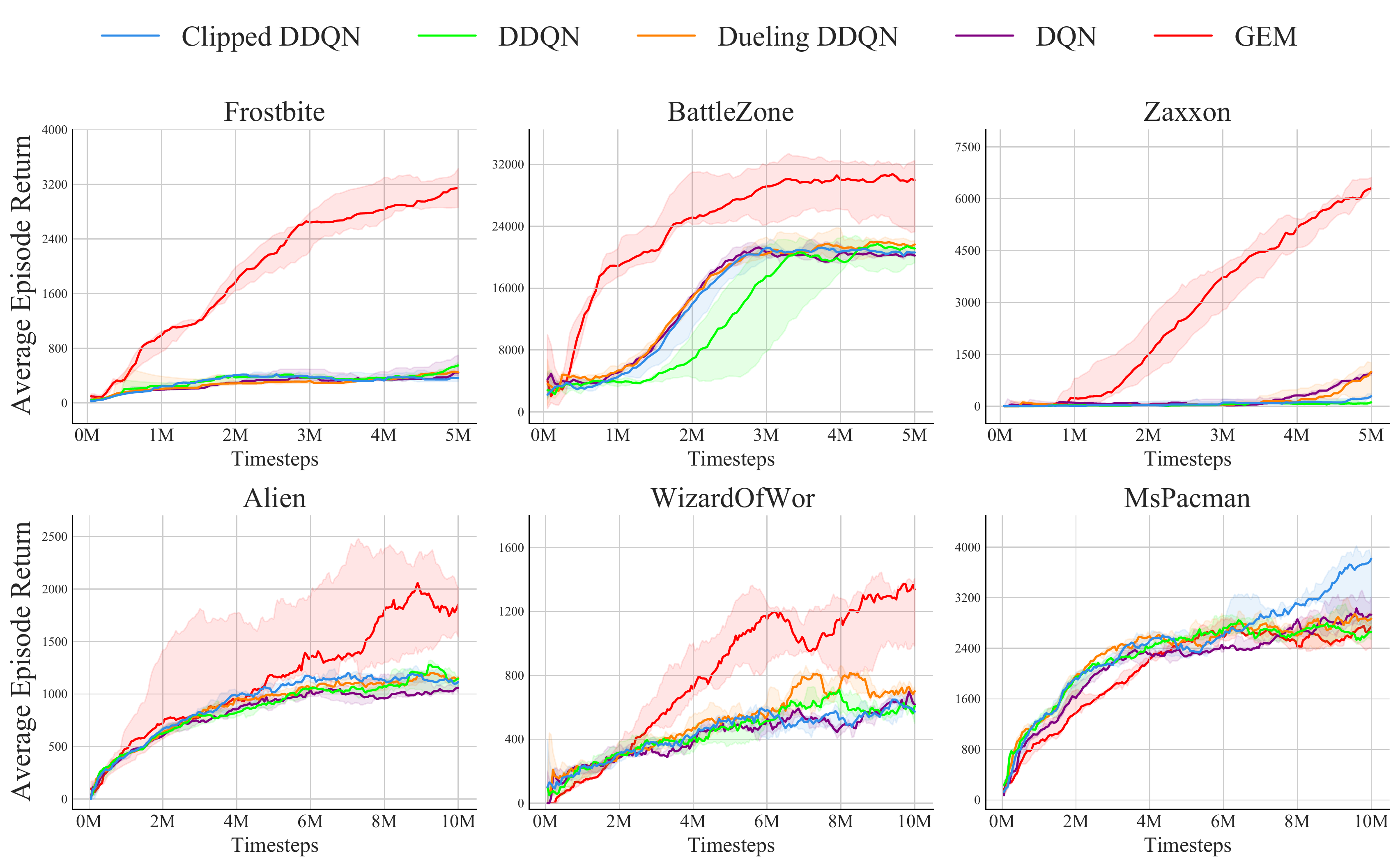}

    \caption{Performance comparison on six Atari games supported by OpenAI gym. Each curve is averaged over three seeds. The shaded region represents the standard deviation of the performance.}
    \label{atari}
\end{figure*}

\subsection{Ablation Study}
This section aims to understand each part's contribution to our proposed algorithm, including the generalizable episodic memory, implicit memory-based planning, and twin back-propagation. We also empirically demonstrate the effect of overestimation. To check whether our method, which uses four networks,  benifits from ensembling effect, we also compare our method with the random ensembling method as used in REDQ \citep{REDQ}.\par

To verify the effectiveness of generalizable episodic memory, we compare our algorithm with SIL, which directly uses historical returns (which can be seen as a discrete episodic memory). As shown in both Figure \ref{performance} and Figure \ref{overestimation}, although SIL improves over TD3, it only has marginal improvement. On the contrary, our algorithm improves significantly over TD3.\par 

\begin{figure*}[htb]
    \centering
    \begin{subfigure}[c]{0.33\textwidth} 
        \centering
        \includegraphics[width=\linewidth]{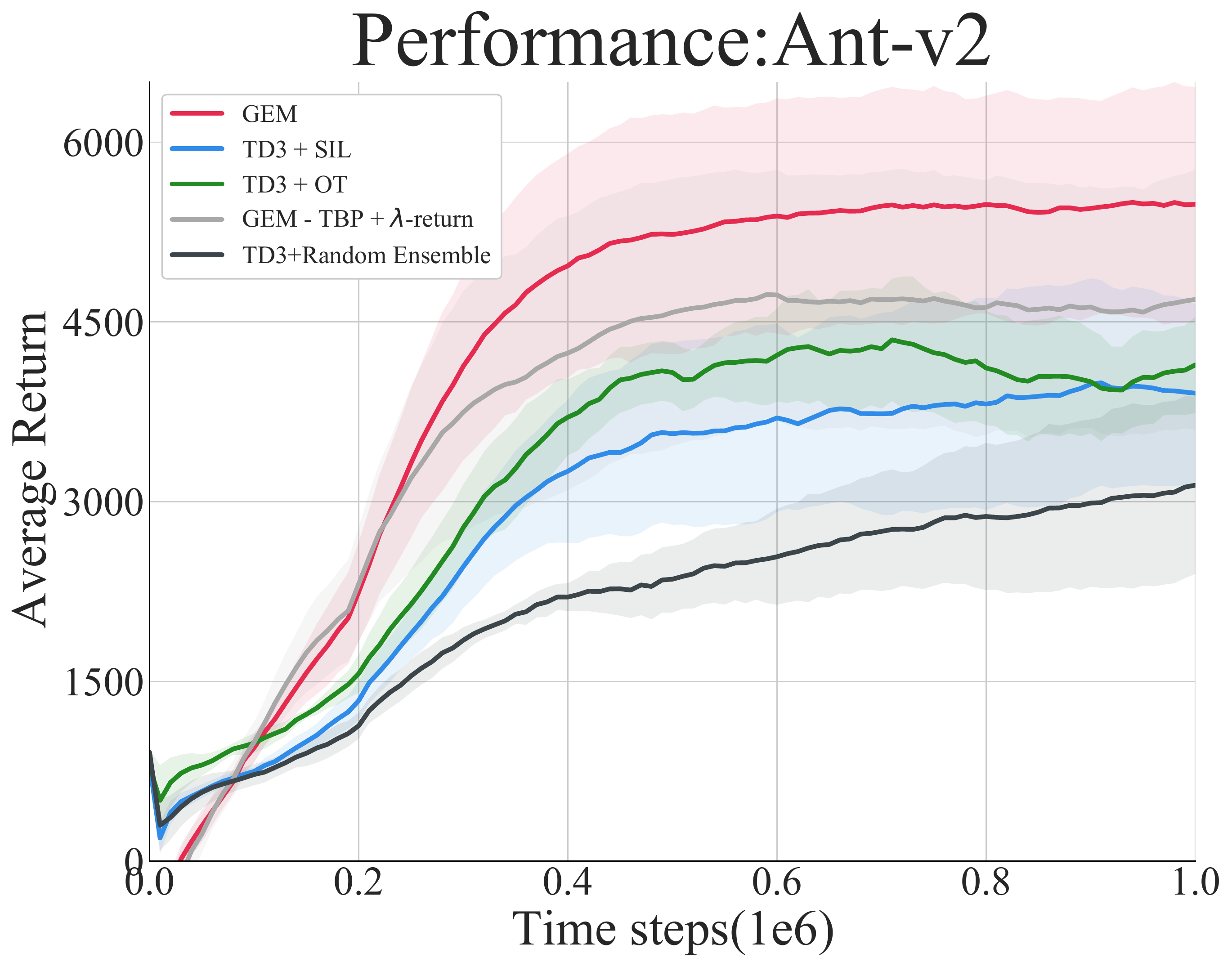}
    \end{subfigure}
    \begin{subfigure}[c]{0.33\textwidth}
        \centering
        \includegraphics[width=\linewidth]{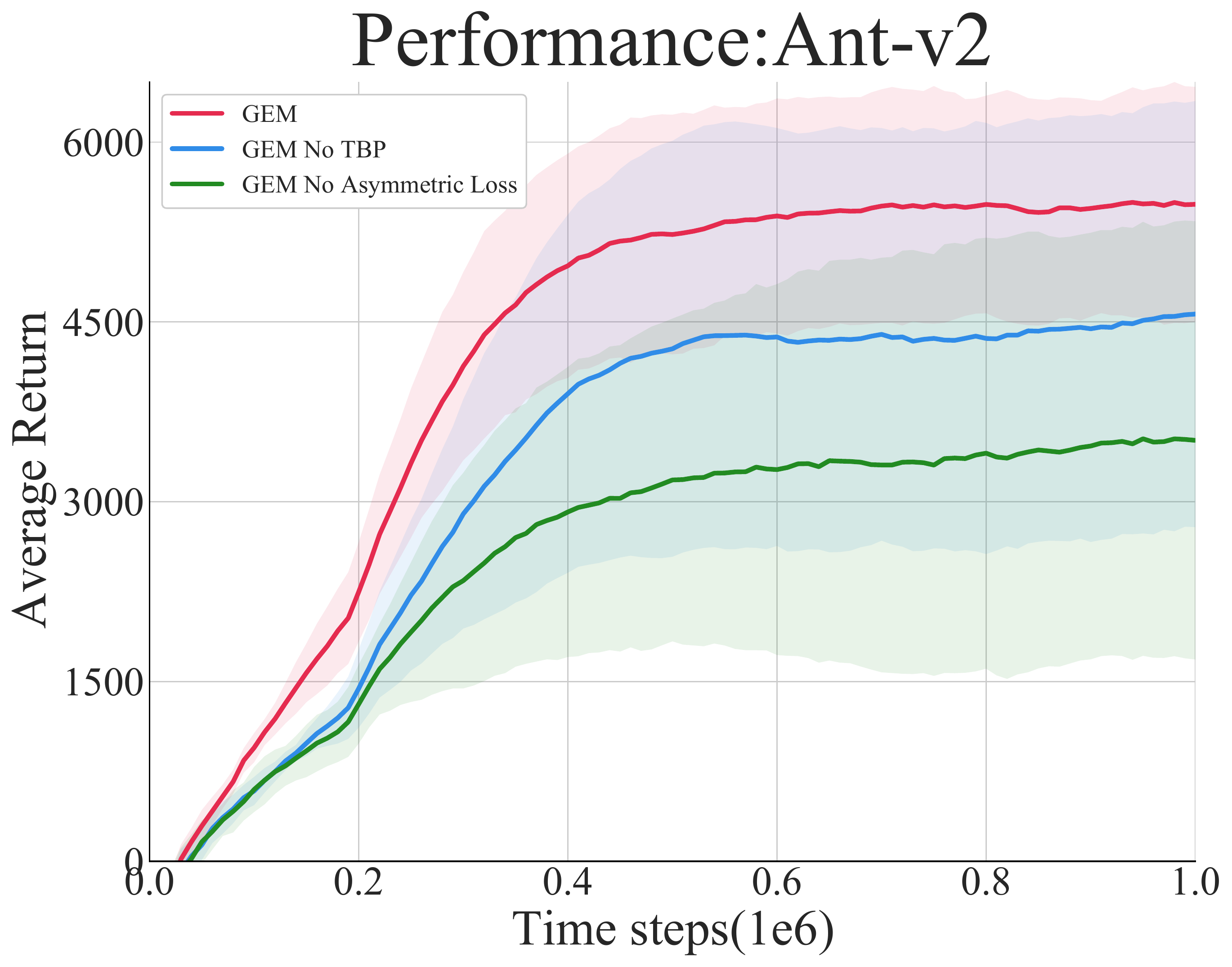}
    \end{subfigure}
    \begin{subfigure}[c]{0.33\textwidth} 
        \centering
        \includegraphics[width=\linewidth]{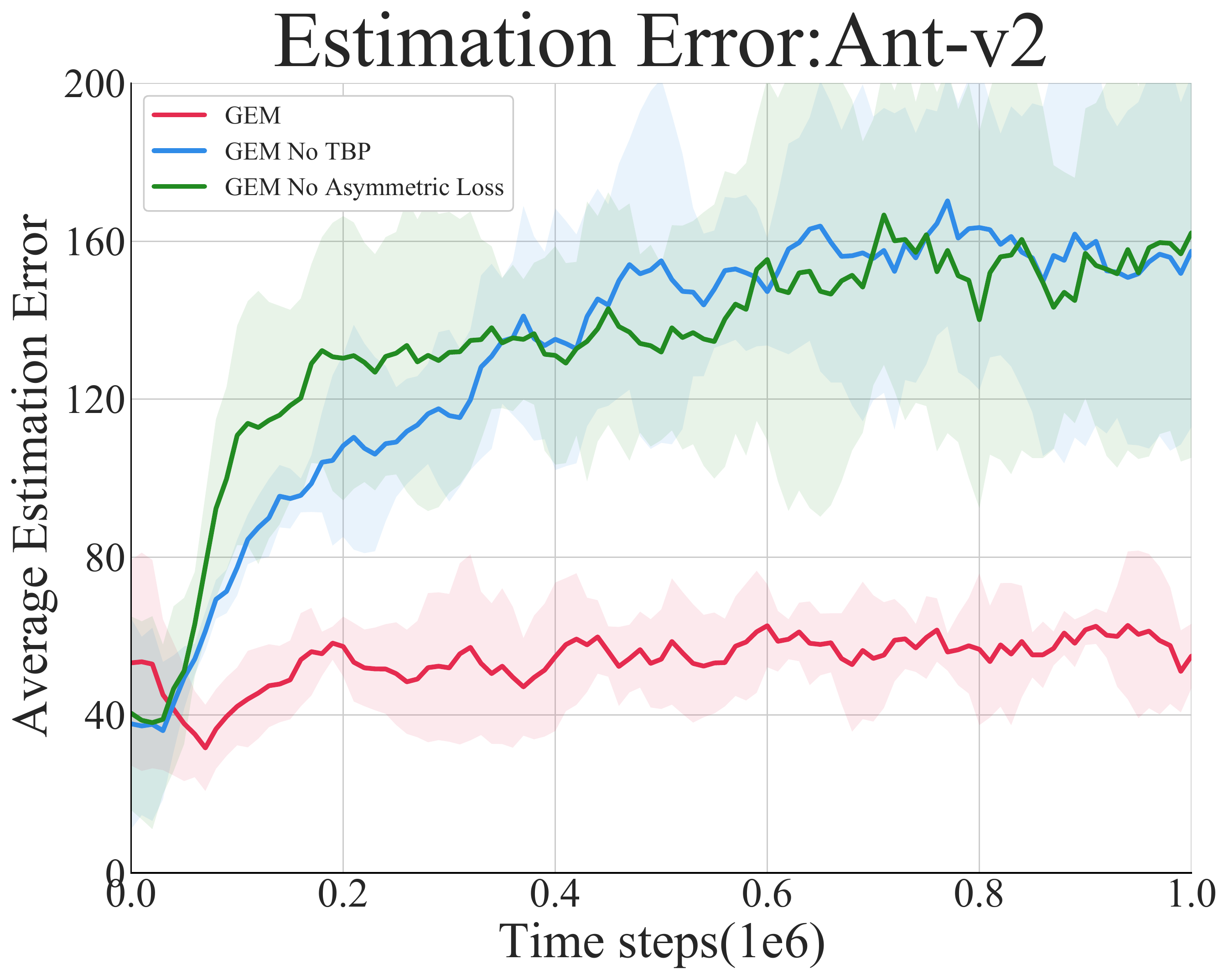}
    \end{subfigure}
    \newline
    \begin{subfigure}[c]{0.33\textwidth} 
        \centering
        \includegraphics[width=\linewidth]{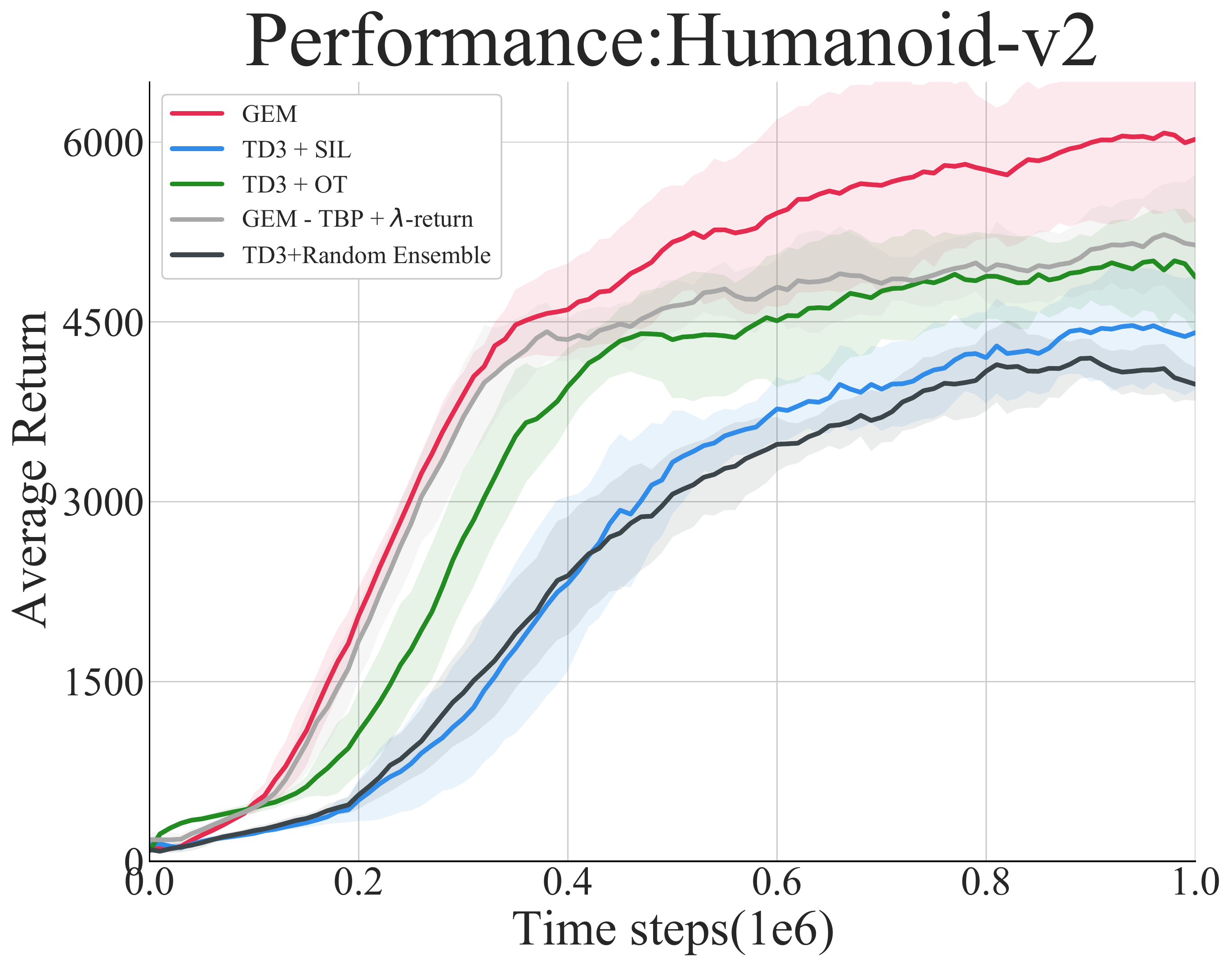}
    \end{subfigure}
    \begin{subfigure}[c]{0.33\textwidth}
        \centering
        \includegraphics[width=\linewidth]{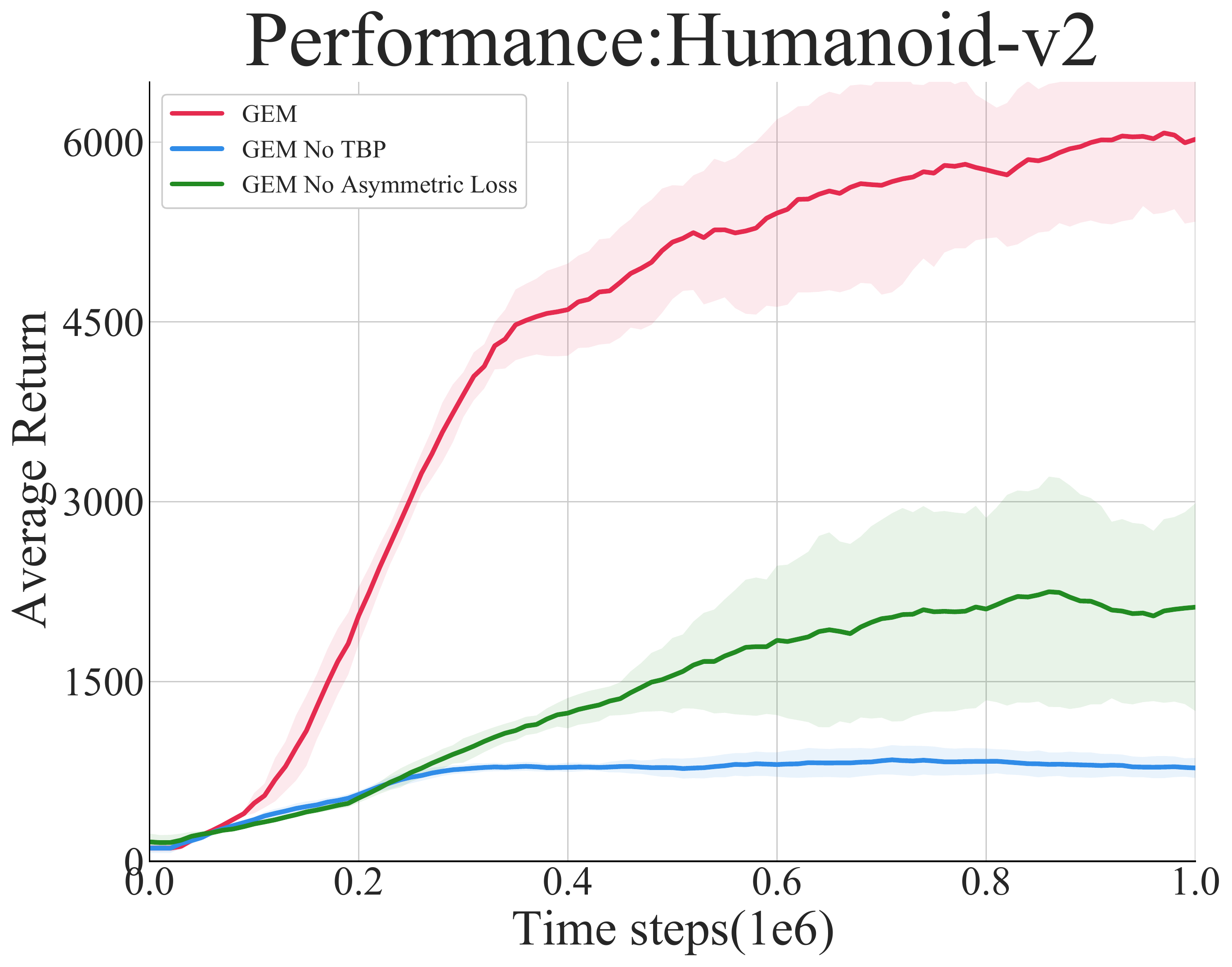}
    \end{subfigure}
    \begin{subfigure}[c]{0.33\textwidth}
    \centering
    \includegraphics[width=\linewidth]{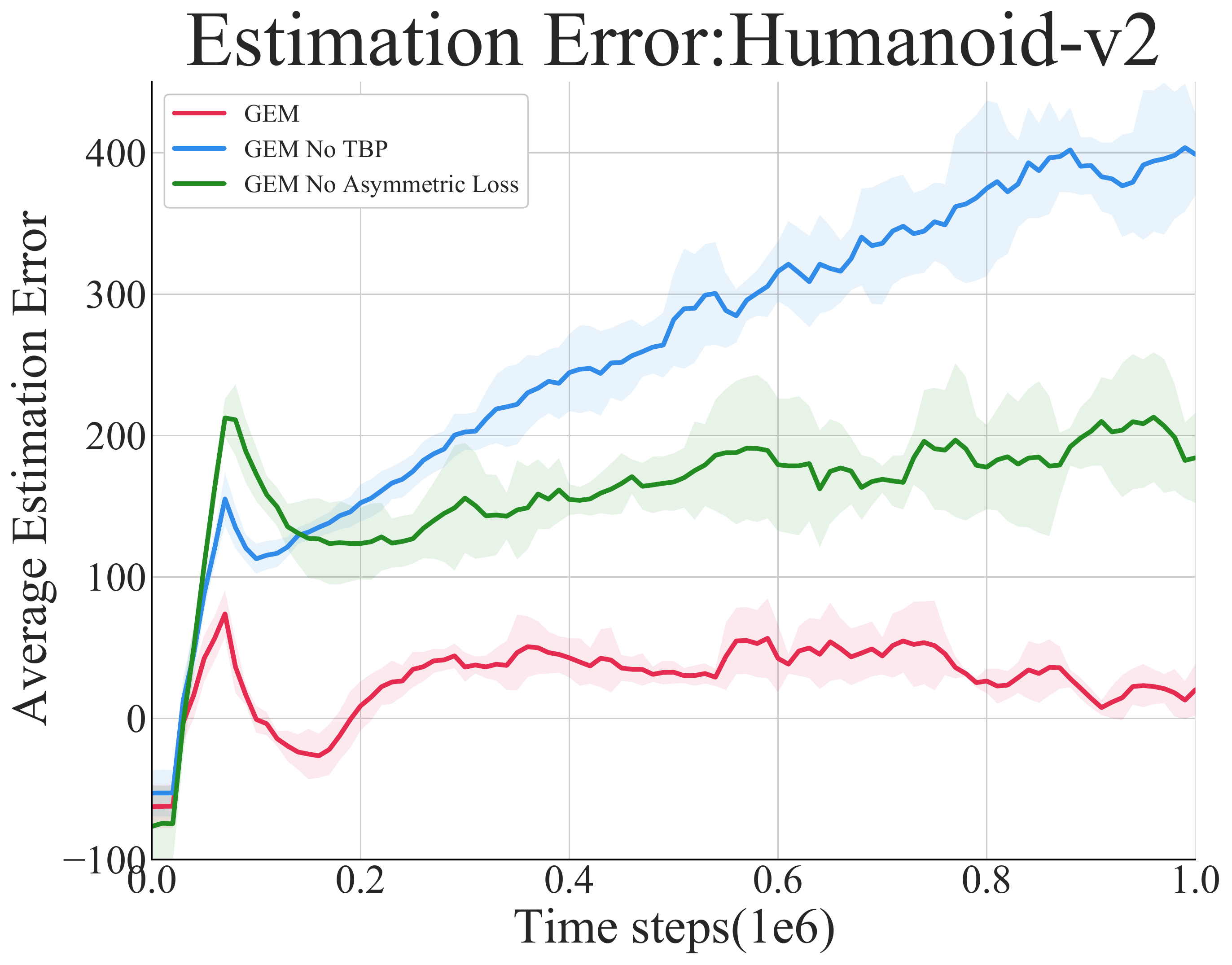}
    \end{subfigure}
    
    \caption{Addtional comparison, ablation study of GEM and empirical evaluation for overestimation. The shaded region represents the standard deviation of the performance. Each curve is averaged over five seeds and is smoothed for visual clarity. Estimation error refers to the average estimated Q-values minus the average returns.}
    \label{overestimation}
\end{figure*}

To understand the contribution of implicit planning, we compare with a variant of our method that uses $\lambda$-return rather than taking the maximum over different steps. We also compare our method with Optimality Tightening \citep{he2016learning}, which shares the similar idea of using trajectory-based information. As Figure~\ref{overestimation} shows, Our method outperforms both methods, which verifies the effectiveness of implicit planning within memories.\par

To check the contribution of our twin network and conservative estimation, we compare the results without the twin back-propagation process and the results without asymmetric losses. The result and the estimation error are summarized in Figure~\ref{overestimation}. We can see that overestimation is severe without TBP or conservative estimation, and the performance is greatly affected, especially in games like Humanoid. In these games, the living condition for the agent is strict; thus overestimation is more severely punished.\par 

To understand the contribution of the ensembling effect in our method, we compare our result with random ensembling as used in \citet{REDQ}, since we are not able to directly remove TBP while still using all four networks. We use the same update-to-data ratio as in GEM for a fair comparison. As Figure~\ref{overestimation} shows, naive ensembling contributes little to performance, and overestimation reduction contributes much more than the ensembling effect. \\

From these comparisons, we can conclude that our generalizable memory aggregates return values much more effectively than directly using historical returns, and the twin back-propagation process contributes to the performance of GEM significantly. In addition, we verify that the merits of the twin back-propagation process mainly come from overestimation reduction rather than ensembling.

\section{Related Work}
\paragraph{Episodic Control.} 
Our method is closely related to episodic reinforcement learning, pioneered by \citet{MFEC}, which introduces the concept of episodic memory into reinforcement learning and uses a non-parametric k-NN search to recall past successful experiences quickly. \citet{NEC} and \citet{EMDQN} consider to include a parametric counterpart for episodic memory, which enables better generalizability through function approximators.  Although they provide some level of generalization ability, the aggregation of different trajectories still relies on the exact re-encountering, and their methods cannot handle continuous action space. Several advanced variants \citep{associative, EBU} propose several ways to connect different experiences within the episodic memory to improve learning speed further. However, their methods are still not generally applicable in the continuous domain since they require taking maximum among all possible actions or require exact re-encountering. \citet{EVA} adopts the similar idea of soft aggregation as ours, but they only consider decision time and leaves the ability of learning by analogy aside. Self-imitation learning \citep{SIL}, another memory-based approach built upon policy gradient methods, uses a return-based lower bound to restrict the value function, which cannot aggregate associative trajectories effectively.

\paragraph{From Discrete to Continuous.}
Researchers have long been attracted to the challenge of making the RL algorithm applicable in the general continuous case. Kernel-based reinforcement learning \citep{ormoneit2002kernel} proposes a way to overcomes the stability problems of temporal-difference learning in continuous state-spaces. \citet{DDPG} extends the success of deep Q-learning into the continuous domain, which successes on various MuJoCo tasks. Similarly, in episodic RL, GEM generalizes the idea of a discrete episodic memory by representing the episodic memory as a network-generated virtual table, making it generally applicable in the continuous case.
\paragraph{Maximization Bias in Q-Learning.}
The maximization bias of Q-learning, first highlighted by \citet{thrun1993issues}, is a long-lasting issue that hinders the learning efficiency of value-based methods.  \citet{DoubleQ} proposed to use a second value estimator as cross-validation to address this bias, and this technique has been extended to the deep Q-learning paradigm \citep{van2016deep}. In practice, since it is intractable to construct two fully independent value estimators, double Q-learning is observed to overestimate sometimes, especially in continuous domains. To address this concern, \citet{TD3} proposed clipped double Q-learning to repress further the incentive of overestimation, which has become the default implementation of most advanced approaches \citep{SAC, kalashnikov2018qt}. Recently, to control the estimation bias more precisely, people utilize an ensemble of value networks \citep{MaxminDQN,  TQC,  REDQ}, which causes high computational costs.

\paragraph{Multi-step Bootstrapping.}
The implementation of our proposed algorithm is also related to a technique named multi-step bootstrapping. This branch originates from the idea of eligibility traces \citep{singh1996reinforcement,sutton1988learning}. Recently, it is prevalent in policy gradient methods such as GAE \citep{schulman2015high} and its variants \citep{touati2018convergent}. In the context of temporal-difference learning, multi-step bootstrapping is also beneficial to improving the stability of Q-learning algorithms \citep{van2018deep}. The main limitation of multi-step bootstrapping is that the performance of multi-step training is sensitive to the bootstrapping horizon choice, and the trajectory used is required to be at least nearly on-policy rather than an arbitrary one.

\section{Conclusion}
This work presents Generalizable Episodic Memory, an effective memory-based method that aggregates different experiences from similar states and future consequences. We perform implicit planning by taking the maximum over all possible combinatorial trajectories in the memory and reduces overestimation error by using twin networks.\par 
We provide theoretical analysis to show that our objective does not overestimate in general and converges to $Q^*$ under mild conditions. Experimental results on continuous control tasks show that our method outperforms state-of-the-art model-free RL methods, as well as episodic memory-based algorithms. Our method also demonstrates general applicability in the discrete domain, such as Atari games. Our method is not primarily designed for discrete domains but can still significantly improve the learning efficiency of RL agents under this setting.\par
We make the first step to endow episodic memory with generalization ability. Besides, generalization ability also relies highly on the representation of states and actions. We leave the study of representation learning in episodic memory as interesting future work.

\section{Acknowledgement}
This work is supported in part by Science and Technology Innovation 2030 – ``New Generation Artificial Intelligence'' Major Project (No. 2018AAA0100904), and a grant from the Institute of Guo Qiang,Tsinghua University.

\bibliography{gem_new}
\bibliographystyle{icml2021}
\newpage
\appendix
\onecolumn
\icmltitle{Supplementary Material}
\icmlsetsymbol{equal}{*}
\vskip 0.3in

\section{GEM Algorithm in Tabular Case}
\label{tabular}
In this section, we present the formal description of the GEM algorithm in tabular case, as shown in Algorithm \ref{alg:tabular}.
\begin{algorithm}[H]
    \caption{Generalizable Episodic Memory in Tabular Case}
    \label{alg:tabular}
    \begin{algorithmic}
        \STATE Initialize table $Q^{(1)}(s,a), Q^{(2)}(s,a)$ arbitrarily,
        \STATE Initial learning step size $\alpha_t$, small $\epsilon>0$ and episode length $l=0$
        \STATE Set $\pi$ to be the $\epsilon$-greedy policy with respect to $Q^{(1)}(s,a)$ or $Q^{(2)}(s,a)$
        \FOR{$t=1,\cdots,$}
            \STATE Initialize and store $s_0$
            \STATE Select action $a_0 \sim \pi(\cdot|s_0)$\;
            \STATE Observe reward $r$ and new state $s'$\;
            \STATE Store transition tuple $(s, a, r, s')$\;
            \STATE $l\leftarrow l+1$
            \IF{ an episode is ended}
                \FOR{ $\tau=t-l,\cdots,t$ }
                    \STATE Compute $R^{(1)}_\tau,R^{(2)}_\tau$ according to Equation~(\ref{update1})\;
                    \STATE Uniformly choose $i\in\{1,2\}$\;
                    \STATE Update $Q^{(i)}(s_\tau,a_\tau) \leftarrow Q^{(i)}(s_\tau,a_\tau)+\alpha_\tau(R_\tau^{(i)}-Q^{(i)}(s_\tau,a_\tau))$\;
                \ENDFOR
                \STATE Set $\pi$ to be the $\epsilon$-greedy policy with respect to $Q^{(1)}(s,a)$ or $Q^{(2)}(s,a)$
                \STATE $l \leftarrow 0$
            \ENDIF
        \ENDFOR
    \end{algorithmic}

\end{algorithm}

\section{Proofs of Theoremss}
\label{proofs}
\begin{theorem}
    Given unbiased and independent estimators $Q^{(1,2)}_\theta(s_{t+h},a_{t+h})=Q^\pi(s_{t+h},a_{t+h})+\epsilon^{(1,2)}_h$, Equation~(\ref{update2}) will not overestimate the true objective, i.e.
    \begin{equation}
        \EX_{\tau,\epsilon} \left[R_t^{(1,2)}(s_t)\right]\leq  \EX_{\tau}\left[\max_{0\leq h\leq T-t-1}Q^\pi_{t,h}(s_t)\right],
    \end{equation}
    where     
    \begin{equation}
        Q^\pi_{t,h}(s,a)=
        \left\{\begin{aligned}
        &\sum_{i=0}^{h}\gamma^i r_{t+i}+\gamma^{h+1} Q^\pi(s_{t+h+1},a_{t+h+1})& \text{if } h<T-t,\\
        &\sum_{i=0}^{h}\gamma^i r_{t+i}& \text{if } h=T-t.
    \end{aligned}
    \right.
    \end{equation}
    and $\tau=\{(s_t,a_t,r_t,s_{t+1})_{t=1,\cdots,T}\}$ is a trajectory.
\end{theorem}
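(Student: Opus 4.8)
The plan is to reduce the claim to the classical fact that the double estimator never overestimates a maximum in expectation \citep{DoubleQ}, invoked \emph{conditionally on the sampled trajectory} $\tau$, so that the only randomness that has to be controlled by the double-estimator argument is the estimator noise $\epsilon^{(1,2)}$.

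First I would unroll the recursion in Equation~(\ref{update1}): for every $h\ge 1$,
\[
V^{(1,2)}_{t,h}=\sum_{i=0}^{h-1}\gamma^i r_{t+i}+\gamma^h \tilde Q^{(1,2)}_\theta(s_{t+h},a_{t+h}),
\]
and then substitute the hypothesis $\tilde Q^{(1,2)}_\theta(s_{t+h},a_{t+h})=Q^\pi(s_{t+h},a_{t+h})+\epsilon^{(1,2)}_h$. Comparing with the definition of $Q^\pi_{t,h}$ in the statement, this gives $V^{(1,2)}_{t,h}=Q^\pi_{t,h-1}(s_t,a_t)+\gamma^h\epsilon^{(1,2)}_h$, where the boundary cases near $h=T-t$ are handled using the conventions $Q_\theta=0$ and $V_{\cdot,\cdot}=0$ past the horizon $T$. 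Consequently, conditioned on $\tau$, the quantity $Q^\pi_{t,h-1}(s_t,a_t)$ is a deterministic constant, each $V^{(i)}_{t,h}$ is an unbiased estimate of it, and the two families $\{V^{(1)}_{t,h}\}_h$ and $\{V^{(2)}_{t,h}\}_h$ are independent of one another because $\epsilon^{(1)}\perp\epsilon^{(2)}$.

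Next I would record (with its one-line proof) the double-estimator lemma: if $\{A_j\}_j$ and $\{B_j\}_j$ are collections of random variables independent of one another with common means $\EX[A_j]=\EX[B_j]=m_j$, and $j^\star\in\argmax_j B_j$ with ties broken by a rule measurable with respect to $\{B_j\}$ alone, then conditioning on $\{B_j\}$ (hence on $j^\star$) and using independence gives $\EX[A_{j^\star}\mid\{B_j\}]=m_{j^\star}\le\max_j m_j$, so $\EX[A_{j^\star}]\le\max_j m_j$. Applying this conditionally on $\tau$, with $A_h=V^{(2,1)}_{t,h}$, $B_h=V^{(1,2)}_{t,h}$, $m_h=Q^\pi_{t,h-1}(s_t,a_t)$, and $j^\star=h^*_{(1,2)}=\argmax_{h>0}V^{(1,2)}_{t,h}$, yields
\[
\EX_\epsilon\!\left[R^{(1,2)}_t\mid\tau\right]=\EX_\epsilon\!\left[V^{(2,1)}_{t,h^*_{(1,2)}}\mid\tau\right]\le \max_{h>0}Q^\pi_{t,h-1}(s_t,a_t)=\max_{0\le h\le T-t-1}Q^\pi_{t,h}(s_t,a_t),
\]
the last equality being the reindexing $h\mapsto h-1$ together with the horizon conventions that pin down the range. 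Taking $\EX_\tau$ of both sides and using the tower property $\EX_{\tau,\epsilon}[R^{(1,2)}_t]=\EX_\tau\!\big[\EX_\epsilon[R^{(1,2)}_t\mid\tau]\big]$ gives the stated inequality.

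The unrolling of the recursion, the reindexing $h\mapsto h-1$, and the horizon-truncation bookkeeping (rollouts cannot extend past step $T$) are routine. The step I expect to require the most care is the conditioning structure: the inequality of \citet{DoubleQ} must be invoked with $\tau$ held fixed, because only then are the target values $Q^\pi_{t,h-1}(s_t,a_t)$ deterministic and only then is the selected index $h^*_{(1,2)}$ independent of the evaluating family $\{V^{(2,1)}_{t,h}\}_h$; I would also be explicit that ties in the $\argmax$ defining $h^*_{(1,2)}$ are resolved using the selecting network's estimates only, so that $h^*_{(1,2)}$ carries no information about $\{V^{(2,1)}_{t,h}\}_h$ beyond what $\tau$ already determines.
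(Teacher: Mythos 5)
Your proposal is correct and follows essentially the same route as the paper's proof: unroll Equation~(\ref{update1}), use the independence and unbiasedness of the evaluating network $Q^{(2,1)}_\theta$ at the index $h^*_{(1,2)}$ selected by the other network to conclude $\EX_\epsilon[R^{(1,2)}_t\mid\tau]=Q^\pi_{t,h^*_{(1,2)}}(s_t,a_t)\le\max_h Q^\pi_{t,h}(s_t,a_t)$, then take expectation over $\tau$. Your version is somewhat more explicit about the conditioning on $\tau$, tie-breaking in the $\argmax$, and the $h\mapsto h-1$ reindexing, but the underlying argument is the same double-estimator step used in the paper.
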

\begin{proof}
    \par
    By unrolling and rewritten Equation~(\ref{update1}), we have 
    \begin{align*}
        R_t^{(1,2)} = V_{t,h^*}=\sum_{i=0}^{h^*}\gamma^i r_{t+i}+\gamma^{h^*+1}Q^{(2,1)}_\theta(s_{t+h^*+1},a_{t+h^*+1}),
    \end{align*}
    Where $h^*$ is the abbrevation for $h^*_{(1,2)}$ for simplicity. Then we have
    \begin{align*}
        \EX_\epsilon\left[ R_t^{(1,2)} - Q^\pi_{t,h^*_{(1,2)}}(s_t)\right]&=\EX\left[V_{t,h^*_{(1,2)}}-Q^\pi_{t,h^*_{(1,2)}}(s_t)\right]\\
        &=\EX\left[\gamma^{h^*+1}\left(Q^{(2,1)}_\theta(s_{t+h^*+1},a_{t+h^*+1})-Q^\pi(s_{t+h^*+1},a_{t+h^*+1})\right)\right]\\
        &=0.\\
    \end{align*}

Then naturally $$\EX_{\tau,\epsilon}[R_t^{(1,2)}] = \EX_{\tau} [Q^\pi_{t,h^*_{(1,2)}}(s_t)]\leq \EX_{\tau} \left[\max_{0\leq h\leq T-t}{Q^\pi_{t,h}(s_t)}\right].$$
\end{proof}

To prepare for the theorem below, we need the following lemma:\par
\begin{lemma}
    \label{lemma1}
    Consider a stochastic process $(\zeta_t,\Delta_t,F_t),t \geq 0$, where $\zeta,\Delta_t,F_t : X \rightarrow \mathbb{R}$ satisfy the equations
    \begin{equation}
        \label{equ:lemma}
        \Delta_{t+1}(x)=(1-\zeta_t(x))\Delta_t(x)+\zeta_t(x) F_t(x)
    \end{equation}
    
    Let $\{P_t\}$ be a filter such that $\zeta_t$ and $\Delta_t$ are $P_t$-measurable, $F_t$ is $P_{t+1}$-measurable, $t\ge 0$. Assume that the following hold:
    
    \begin{itemize}
        \item $X$ is finite: $|X|<+\infty$.
        \item $\zeta_t(x)\in[0,1]$, $\sum_t\zeta_t(x)=+\infty$, $\sum_t\zeta_t^2(x)<+\infty$ a.s. for all $x\in X$.
        \item $\|\mathbb E(F_t|P_t)\|_\infty\le \kappa \|\Delta_t\|_\infty+c_t$, where $\kappa\in[0,1)$ and $c_t\stackrel{\text{a.s}}{\rightarrow}0$.
        \item $\text{Var}(F_t|P_t)\le K(1+\|\Delta_t\|_\infty)^2$, where $K$ is some constant.
    \end{itemize}
    
    Then $\Delta_t$ converge to zero w.p.1.
\end{lemma}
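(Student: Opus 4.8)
The plan is to recognize this as a standard asynchronous stochastic-approximation result (in the lineage of Jaakkola, Jordan \& Singh and Bertsekas \& Tsitsiklis, and essentially the convergence lemma underlying double Q-learning) and to reduce it to that machinery, with the vanishing bias term $c_t$ handled by an extra comparison step. First I would decompose the noise into its conditional mean and a martingale-difference remainder: set $w_t := F_t - \mathbb{E}[F_t \mid P_t]$, so that $\mathbb{E}[w_t \mid P_t] = 0$ and, by the fourth hypothesis, $\mathrm{Var}(w_t \mid P_t) \le K(1 + \|\Delta_t\|_\infty)^2$. Then Equation~(\ref{equ:lemma}) becomes $\Delta_{t+1}(x) = (1-\zeta_t(x))\Delta_t(x) + \zeta_t(x)\big(\mathbb{E}[F_t \mid P_t](x) + w_t(x)\big)$, in which the drift term is a $\kappa$-pseudo-contraction toward $0$ up to the additive, asymptotically negligible error governed by $\|\,\mathbb{E}[F_t\mid P_t]\,\|_\infty \le \kappa \|\Delta_t\|_\infty + c_t$.

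Second, I would prove that $\sup_t \|\Delta_t\|_\infty < \infty$ almost surely. I expect this to be the main obstacle, because the conditional variance of $w_t$ is permitted to grow quadratically in $\|\Delta_t\|_\infty$, so one cannot directly invoke martingale convergence to control the fluctuations. The standard remedy is a rescaling / stopping-time argument: track the running maximum of $\|\Delta_t\|_\infty$, normalize the recursion by it, and use the square-summability $\sum_t \zeta_t^2(x) < \infty$ (together with $\sum_t \zeta_t(x) = \infty$ and the contraction) to show that the normalized process cannot escape to infinity; the term $c_t$, being convergent, is in particular bounded and so does not interfere. Finiteness of $X$ is used here to reduce the sup-norm statement to finitely many scalar recursions.

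Third, once $\|\Delta_t\|_\infty$ is a.s.\ bounded the noise variance is a.s.\ bounded, and for $c_t \equiv 0$ the classical asynchronous stochastic-approximation theorem yields $\Delta_t \to 0$ directly. To absorb a general $c_t \to 0$, I would use the usual iterated-contraction sandwich: given $\varepsilon > 0$ choose a (random) index $t_\varepsilon$ beyond which $c_t \le \varepsilon$, build a comparison recursion with drift $\kappa\,(\cdot) + \varepsilon$ started from $\|\Delta_{t_\varepsilon}\|_\infty$, observe that $\|\Delta_t\|_\infty$ is eventually dominated by this comparison process up to a vanishing martingale remainder, and note that the comparison process converges to $\varepsilon/(1-\kappa)$. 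Thus $\limsup_t \|\Delta_t\|_\infty \le \varepsilon/(1-\kappa)$ a.s.\ for every $\varepsilon > 0$, which forces $\Delta_t \to 0$ w.p.1. Since steps two and three are essentially quotations of known results, the write-up can largely consist of matching the hypotheses to those theorems rather than redeveloping the stochastic-approximation arguments from scratch.
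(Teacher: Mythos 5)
The paper does not actually prove this lemma: it states it, remarks that it ``is also used in Double-Q learning'' and omits the proof, deferring to the standard stochastic-approximation literature (van Hasselt's double Q-learning, which in turn rests on Singh et al.\ and Jaakkola--Jordan--Singh). Your sketch reconstructs exactly that standard argument---martingale-difference decomposition of $F_t$, almost-sure boundedness of $\|\Delta_t\|_\infty$ via a rescaling/stopping argument despite the quadratic variance bound, and an $\varepsilon$-comparison process to absorb the vanishing bias $c_t$, giving $\limsup_t\|\Delta_t\|_\infty\le \varepsilon/(1-\kappa)$ for every $\varepsilon>0$---so it is correct and consistent with the proof the paper implicitly relies on; there is no gap to flag.
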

This lemma is also used in Double-Q learning \citep{DoubleQ} and we omit the proof for simplicity. \\
In the following sections, we use $\norm{\cdot}$ to represent the infinity norm $\norm{\cdot}_\infty$.

\begin{theorem}
    \leavevmode
    Algorithm \ref{alg:tabular} converge to $Q^*$ w.p.1 with the following conditions:
    \begin{enumerate}
        \item The MDP is finite, i.e. $|\mathcal{S}\times\mathcal{A}|\leq\infty$
        \item $\gamma \in [0,1)$
        \item The Q-values are stored in a lookup table
        \item $\alpha_t(s,a) \in [0,1],\sum_t{\alpha_t(s,a)}=\infty,\sum_t{\alpha_t^2(s,a)}\leq\infty$
        \item The environment is fully deterministic, i.e. $P(s'|s,a)=\delta(s'=f(s,a))$ for some deterministic transition function $f$
    \end{enumerate}
\end{theorem}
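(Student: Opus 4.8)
The plan is to realize the tabular update of Algorithm~\ref{alg:tabular} as a stochastic-approximation recursion and invoke Lemma~\ref{lemma1}, following the convergence argument for Double Q-learning \citep{DoubleQ} but with the maximization over \emph{rollout horizons} in Equations~(\ref{update1})--(\ref{update2}) playing the role that the maximization over actions plays there. Fix $i\in\{1,2\}$, let $j$ be the other index, and set $\Delta^{(i)}_t(s,a)=Q^{(i)}_t(s,a)-Q^*(s,a)$ and $\Delta_t=\max_i\norm{\Delta^{(i)}_t}$. Whenever $(s,a)$ is the pair replayed at step $\tau$ and $i$ is the table selected, the update has the form of Equation~(\ref{equ:lemma}) with effective step size $\zeta_t(s,a)$ equal to the product of $\alpha_\tau$, the table-selection probability $\tfrac12$, and the indicator that $(s,a)$ is the replayed pair, and with noise term $F_t(s,a)=R^{(i)}_\tau-Q^*(s,a)$; since the behavior policy is $\epsilon$-greedy and the MDP is finite, every $(s,a)$ is replayed infinitely often, so $\zeta_t$ inherits the Robbins--Monro summability from the hypothesis on $\alpha_t$. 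It then suffices to check the two substantive hypotheses of Lemma~\ref{lemma1}: the quadratic variance bound $\mathrm{Var}(F_t\mid P_t)\le K(1+\norm{\Delta_t})^2$, which is immediate because $F_t$ is an affine combination of bounded rewards and stored $Q$-values, and the contraction bound $\norm{\EX[F_t\mid P_t]}\le\gamma\norm{\Delta_t}+c_t$ with $c_t\to 0$ almost surely.

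For the contraction bound I would first write, using Equation~(\ref{update2}) and the cancellation of the reward prefixes,
\[
    R^{(i)}_\tau-Q^*(s_\tau,a_\tau)=\Big(\max_{h\ge 1}V^{(i)}_{\tau,h}-Q^*(s_\tau,a_\tau)\Big)+\gamma^{h^*}\big(Q^{(j)}_t-Q^{(i)}_t\big)(s_{\tau+h^*},a_{\tau+h^*}),
\]
where $h^*=\argmax_{h\ge1}V^{(i)}_{\tau,h}$. The second term is at most $\gamma\norm{Q^{(1)}_t-Q^{(2)}_t}$ in absolute value; exactly as in Double Q-learning, a second application of Lemma~\ref{lemma1} to the auxiliary process $\Psi_t=Q^{(1)}_t-Q^{(2)}_t$, whose two tables receive updates that are symmetric in expectation, shows $\norm{\Psi_t}\to 0$ w.p.1, so this term is absorbed into $c_t$. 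The upper side of the first term uses the determinism assumption: unrolling $Q^*(s,a)=r(s,a)+\gamma\max_{a'}Q^*(f(s,a),a')$ along the replayed trajectory gives $\sum_{k=0}^{h-1}\gamma^k r_{\tau+k}+\gamma^h Q^*(s_{\tau+h},a_{\tau+h})\le Q^*(s_\tau,a_\tau)$ for every $h\ge 1$, hence $\max_{h\ge1}V^{(i)}_{\tau,h}\le Q^*(s_\tau,a_\tau)+\gamma\norm{\Delta^{(i)}_t}$.

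The main obstacle is the matching lower bound --- ruling out that the update keeps $Q^{(i)}$ permanently below $Q^*$ --- and this is the step that genuinely departs from Double Q-learning: the TBP target in Equation~(\ref{update1}) bootstraps from the action \emph{actually taken} along the replayed trajectory rather than from $\max_{a'}Q(\cdot,a')$, so an episode that continues suboptimally produces a target far below $Q^*(s_\tau,a_\tau)$, and the naive estimate $\norm{F_t}\le\gamma\Delta_t$ fails pointwise. The resolution I would use again leans on determinism: in a deterministic MDP the quantity $Q_{\text{max}}$ of Definition~\ref{def:near-deterministic} equals $Q^*$, and under $\epsilon$-greedy exploration in a finite MDP the fully optimal continuation out of every $(s,a)$ is replayed infinitely often; along such a replay $R^{(i)}_\tau\ge Q^*(s_\tau,a_\tau)-\gamma\norm{\Delta^{(j)}_t}-\gamma^{H}B$, where $H$ is the residual horizon of the episode and $B$ is a uniform bound on $|Q^{(i)}_t-Q^*|$. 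This lower envelope, together with the Robbins--Monro averaging and the upper bound above, pins the limit to $Q^*$; phrased intrinsically, this is the two-sided contraction of $\norm{(Q-Q^*)_+}$ and $\norm{(Q_{\text{max}}-Q)_+}$ behind Lemma~\ref{theorem3}, specialized to the deterministic case where the admissible envelope collapses to the single point $Q^*$. With the contraction and variance bounds in hand, Lemma~\ref{lemma1} yields $\Delta^{(i)}_t\to 0$ w.p.1 for $i=1,2$, i.e.\ both tables converge to $Q^*$, which is the assertion.
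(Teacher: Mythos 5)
Your overall skeleton is the same as the paper's: cast the tabular update as the stochastic-approximation recursion of Lemma~\ref{lemma1}, split the target as (single-estimator horizon-max target) plus the cross-term $\gamma^{h^*+1}\bigl(Q^{(j)}-Q^{(i)}\bigr)$ at the bootstrap state, kill the cross-term by a second application of Lemma~\ref{lemma1} to $Q^{(1)}-Q^{(2)}$ with step size $\alpha_t/2$, and get the upper side of the contraction by unrolling $Q^*$ along the deterministic trajectory. All of that matches the paper's argument.

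The genuine gap is in your lower bound. The paper does not need, and does not use, an ``optimal continuation is replayed infinitely often'' argument: in its proof the value that terminates each rollout is evaluated at the \emph{greedy} action $\tilde a^*$ of the current table (exactly as in double Q-learning; this mirrors the deep algorithm, which re-selects $\tilde a_{t+1}\sim\pi_{\phi'}(s_{t+1})$ rather than reusing the stored behavior action). With that reading, $\tilde R^{(1)}_t\ge r_t+\gamma Q^{(1)}(s_{t+1},\tilde a^*)\ge r_t+\gamma Q^{(1)}(s_{t+1},a^*)$, and since determinism gives $Q^*(s_t,a_t)=r_t+\gamma Q^*(s_{t+1},a^*)$ exactly, the lower bound $\tilde R^{(1)}_t-Q^*(s_t,a_t)\ge-\gamma\norm{\Delta_t}$ is immediate, so $|G_t|\le\gamma\norm{\Delta_t}$ pointwise. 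Your patch does not deliver what Lemma~\ref{lemma1} needs: the hypothesis is a bound on the \emph{conditional expectation} $\norm{\EX(F_t\mid P_t)}_\infty\le\kappa\norm{\Delta_t}_\infty+c_t$ holding at every step, whereas a lower envelope that is valid only along the (rare) replays in which the $\epsilon$-greedy behavior happens to execute the fully optimal continuation says nothing about $\EX(F_t\mid P_t)$ on the overwhelming majority of replays, where, under your trajectory-action reading, the target can sit persistently below $Q^*(s_\tau,a_\tau)$ by an amount not controlled by $\gamma\norm{\Delta_t}$ (for fixed $\epsilon$ this bias does not vanish, so it cannot be absorbed into $c_t$ either). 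The claimed envelope $R^{(i)}_\tau\ge Q^*-\gamma\norm{\Delta^{(j)}_t}-\gamma^H B$ is also asserted rather than derived. To close the proof you should either adopt the paper's convention that the $h=0$ bootstrap in Equation~(\ref{update1}) uses the greedy (double-estimator) action of the current tables, which reduces the lower bound to the one-line double-Q computation above, or else add an explicit assumption of decaying exploration and prove a conditional-expectation bound --- the infinitely-often argument alone does not suffice.
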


\begin{proof}
    This is a sketch of proof and some technical details are omitted.\par
    We just need to show that without double-q version, the update will be a $\gamma$-contraction and will converge. Then we need to show that $\norm{Q^1-Q^2}$ converge to zero, which is similar with double-q learning.\par
    We only prove convergence of $Q^{(1)}$, and by symmetry we have the conclusion.\par
    Let $\Delta_t = Q_t^{(1)}-Q^*$, and $F_t(s_t,a_t)=R^{(1)}_t-Q^*(s_t,a_t)$,\par
    Then the update rule can be written exactly as  Equation~(\ref{equ:lemma}):
    $$\Delta_{t+1}=(1-\alpha_t)\Delta_t+\alpha_t F_t.$$
    We define $$G_t=\tilde{R}^{(1)}_t-Q^*(s_t,a_t)=F_t+(\tilde{R}^{(1)}_t-R^{(1)}_t),$$\par
    where $\tilde{R}^{(1)}=R^{(1)}_{t,h^*_{(1)}},$and the notation is kept the same as in Equation~(\ref{update1}) \& (\ref{update2}).\par
    To use Lemma \ref{lemma1}, we only need to prove that $G_t$ is a $\gamma$-contraction and $c_t=\tilde{R}^{(1)}_t-R^{(1)}_t$ converge to zero.\par
    On the one hand, 
    \begin{align*}
        \tilde{R}^{(1)}_t-Q^*(s_t,a_t)&\geq r_t+\gamma \tilde{Q}^{(1)}(s_{t+1},\tilde{a}^*) -Q^*(s_t,a_t)\\
        &=r_t+\gamma \tilde{Q}^{(1)}(s_{t+1},\tilde{a}^*) -r_t+\gamma Q^*(s_{t+1},a^*)\\
        &=\gamma (\tilde{Q}^{(1)}(s_{t+1},\tilde{a}^*)-Q^*(s_{t+1},a^*))\\
        &\geq -\gamma\norm{\Delta_t}.\\
    \end{align*}

    On the other hand, 
    \begin{align*}
        \tilde{R}^{(1)}_t-Q^*(s_t,a_t)&=\sum_{i=0}^{h^*_{(1)}}\gamma^i r_{t+i}+\gamma^{h^*_{(1)}+1}\tilde{Q}^{(1)}(s_{t+h^*_{(1)}+1},\tilde{a}^*)- Q^*(s_t,a_t)\\
        &\leq \sum_{i=0}^{h^*_{(1)}}\gamma^i r_{t+i}+\gamma^{h^*_{(1)}+1}\tilde{Q}^{(1)}_\pi(s_{t+h^*_{(1)}+1})\\
        &-(\sum_{i=0}^{h^*_{(1)}}\gamma^i r_{t+i}+\gamma^{h^*_{(1)}+1}Q^*(s_{t+h^*_{(1)}+1},a^*))\\
        &=\gamma^{h^*_{(1)}+1} (\tilde{Q}^{(1)}(s_{t+h^*_{(1)}+1},\tilde{a}^*)-Q^*(s_{t+h^*_{(1)}+1},a^*))\\
        &\leq \gamma (\tilde{Q}^{(1)}(s_{t+h^*_{(1)}+1},\tilde{a}^*)-Q^*(s_{t+h^*_{(1)}+1},a^*))\\
        &\leq \gamma \norm{\Delta_t}.\\
    \end{align*}
    Thus $G_t$ is a $\gamma$-contraction w.r.t $\Delta_t$.\par
    Finally we show $c_t =\tilde{R}^{(1)}-R^{(1)}_t $ converges to zero.\par
    Note that $c_t = \gamma^{h^*_{(1)}}(\tilde{Q}^{(1)}-\tilde{Q}^{(2)})$, it suffices to show that $\Delta^{1,2}=\tilde{Q}^{(1)}-\tilde{Q}^{(2)}$ converge to zero.\par
    Depending on whether $\tilde{Q}^{(1)}$ or $\tilde{Q}^{(2)}$ is updated, the update rule can be written as 
    $$\Delta^{1,2}_{t+1}=\Delta^{1,2}_{t}+\alpha_t F_t^{(2)}(s_t,a_t),$$
    or
    $$\Delta^{1,2}_{t+1}=\Delta^{1,2}_{t}-\alpha_t F_t^{(1)}(s_t,a_t),$$
    where
    $F_t^{(1)}=R_t^{(1)}-\tilde{Q}_t^{(2)}$ and $F_t^{(2)}=R_t^{(2)}-\tilde{Q}_t^{(1)}$.\par
    Now let $\zeta_t=\frac{1}{2}\alpha_t$, we have
    \begin{align*}
       \EX[\Delta^{1,2}_{t+1}|P_t]&=\frac{1}{2} (\Delta^{1,2}+\alpha_t \EX [F_t^{(2)}])+\frac{1}{2} (\Delta^{1,2}-\alpha_t \EX[F_t^{(1)}])\\
       &=(1-\zeta_t)\Delta^{1,2}_t+\zeta_t \EX[R_t^{(2)}-R_t^{(1)}] \\
    \end{align*}
    
    when $\EX[R_t^{(2)}] \geq \EX[R_t^{(1)}],$
    by definition we have $\EX[R_t^{(2)}]\leq\EX[\tilde{R}_t^{(2)}].$\par
    Then 
    \begin{align*}
        |\EX[R_t^{(2)}-R_t^{(1)}]|&\leq \EX[\tilde{R}_t^{(2)}-R_t^{(1)}]\\
        &\leq \gamma^{h^*_{(2)}+1}(Q^{(1)}(s_{t+h^*_{(2)}+1},a^*_{(1)})-Q^{(2)}(s_{t+h^*_{(2)}+1},a^*_{(1)}))\\
        &\leq \gamma \norm{\Delta^{1,2}_t}.\\
    \end{align*}

    Similarly, $\EX[R_t^{(2)}] < \EX[R_t^{(1)}],$, we have
    \begin{align*}
        |\EX[R_t^{(2)}-R_t^{(1)}]|&\leq \EX[\tilde{R}_t^{(1)}-R_t^{(2)}]\\
        &\leq \gamma^{h^*_{(1)}+1}(Q^{(2)}(s_{t+h^*_{(1)}+1},a^*_{(2)})-Q^{(1)}(s_{t+h^*_{(1)}+1},a^*_{(2)}))\\
        &\leq \gamma \norm{\Delta^{1,2}_t}.\\
    \end{align*}

    Now in both scenairos we have $|E\{F^{(1,2)}_t|P_t\}|\leq\gamma \norm{\Delta^{1,2}_t}$ holds. Applying Lemma \ref{lemma1} again we have the desired results.
\end{proof}
The theorem apply only to deterministic scenairos. Nevertheless, we can still bound the performance when the environment is stochastic but nearly deterministic.\par
\begin{theorem}
    $\tilde{Q}(s,a)$ learned by Algorithm \ref{alg:tabular} satisfy the following inequality:
    \begin{equation}
        \forall s\in \mathcal{S},a\in\mathcal{A},Q^*(s,a)\leq \tilde{Q}(s,a) \leq Q_{\text{max}}(s,a),
    \end{equation}
    w.p.1 with condition 1-4 in Theorem \ref{theorem2}.
\end{theorem}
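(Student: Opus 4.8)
The plan is to recycle the stochastic-approximation argument behind Theorem~\ref{theorem2}, but with a weaker target: once the environment is stochastic, $Q^{(1)}_t$ need not converge to any fixed point, so instead of driving the two-sided error $\norm{Q^{(1)}_t-Q^*}$ to zero, I track the two \emph{one-sided} errors $(Q^*-Q^{(1)}_t)_+$ and $(Q^{(1)}_t-Q_{\text{max}})_+$ and show that each obeys a $\gamma$-pseudo-contraction with a vanishing offset, so that the one-sided form of Lemma~\ref{lemma1} forces it to zero w.p.1. By the symmetry between the two estimators it suffices to treat $Q^{(1)}$. I will use throughout that $\norm{Q^{(1)}_t-Q^{(2)}_t}\to 0$ w.p.1; this is established exactly as in the proof of Theorem~\ref{theorem2}, and since that part of the argument only manipulates the backed-up quantities $V^{(1,2)}_{t,h}$ pathwise, it never invokes the determinism hypothesis and remains valid here.

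\emph{Upper bound $\tilde Q\le Q_{\text{max}}$.} Write the tabular update as the recursion~(\ref{equ:lemma}) with $\zeta_t=\alpha_t$, $\Delta_t=Q^{(1)}_t-Q_{\text{max}}$, $F_t(s_t,a_t)=R^{(1)}_t-Q_{\text{max}}(s_t,a_t)$, and $\alpha_t\equiv0$ off the visited pair. Unrolling~(\ref{update1})--(\ref{update2}) gives $R^{(1)}_t=\sum_{i=0}^{h^*}\gamma^i r_{t+i}+\gamma^{h^*+1}Q^{(2)}_t(s_{t+h^*+1},a_{t+h^*+1})$ (the bootstrap term being dropped when $h^*=T-t$). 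The key point is that, along \emph{every} realized trajectory, $\sum_{i=0}^{h^*}\gamma^i r_{t+i}+\gamma^{h^*+1}Q_{\text{max}}(s_{t+h^*+1},a_{t+h^*+1})\le Q_{\text{max}}(s_t,a_t)$: the realized successor states lie in the respective supports, so the left-hand side is the return of one admissible partial rollout followed by the best admissible continuation, which is dominated by the maximization defining $Q_{\text{max}}(s_t,a_t)$; formally one peels the recursion $Q_{\text{max}}(s,a)=r(s,a)+\gamma\max_{s'\in\mathrm{supp}P(\cdot\mid s,a)}\max_{a'}Q_{\text{max}}(s',a')$ back along the rollout. Subtracting gives, pathwise, $F_t\le\gamma^{h^*+1}(Q^{(2)}_t-Q_{\text{max}})(s_{t+h^*+1},a_{t+h^*+1})\le\gamma\norm{(Q^{(2)}_t-Q_{\text{max}})_+}\le\gamma\norm{(\Delta_t)_+}+\gamma\norm{Q^{(1)}_t-Q^{(2)}_t}$, so $\EX[F_t\mid P_t]\le\gamma\norm{(\Delta_t)_+}+c_t$ with $c_t=\gamma\norm{Q^{(1)}_t-Q^{(2)}_t}\to0$ a.s.\ The variance bound and the learning-rate hypotheses transfer verbatim from Theorem~\ref{theorem2}, so the one-sided version of Lemma~\ref{lemma1} yields $(\Delta_t)_+\to0$, i.e.\ $\limsup_t Q^{(1)}_t\le Q_{\text{max}}$ w.p.1.

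\emph{Lower bound $Q^*\le\tilde Q$.} Now take $\Delta_t=Q^{(1)}_t-Q^*$ and $F_t(s_t,a_t)=R^{(1)}_t-Q^*(s_t,a_t)$. Since $h=1$ is admissible in the $\argmax$ defining $h^*_{(1)}$, and replacing the evaluating network changes only the bootstrap term, $R^{(1)}_t\ge V^{(1)}_{t,1}-\gamma^{h^*_{(1)}+1}\norm{Q^{(1)}_t-Q^{(2)}_t}\ge r_t+\gamma\max_a Q^{(1)}_t(s_{t+1},a)-\gamma\norm{Q^{(1)}_t-Q^{(2)}_t}$, using the Bellman-optimality form of the one-step backup exactly as in the proof of Theorem~\ref{theorem2}. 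Taking conditional expectations and combining $\max_a Q^{(1)}_t(s',a)\ge\max_a Q^*(s',a)-\norm{(Q^*-Q^{(1)}_t)_+}$ with the identity $r(s_t,a_t)+\gamma\,\EX_{s'}[\max_a Q^*(s',a)]=Q^*(s_t,a_t)$ gives $\EX[F_t\mid P_t]\ge-\gamma\norm{(\Delta_t)_-}-\gamma\norm{Q^{(1)}_t-Q^{(2)}_t}$, with $(\Delta_t)_-:=(Q^*-Q^{(1)}_t)_+$. The same one-sided application of Lemma~\ref{lemma1}, now to $(\Delta_t)_-$, gives $(\Delta_t)_-\to0$, i.e.\ $\liminf_t Q^{(1)}_t\ge Q^*$ w.p.1. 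Together with the symmetric statements for $Q^{(2)}$ this is the claim.

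\emph{Main obstacle.} The delicate ingredient is the one-sided version of Lemma~\ref{lemma1}: the positive part $(\Delta_t)_+$ does not obey the exact recursion~(\ref{equ:lemma}) but only the inequality $(\Delta_{t+1})_+\le(1-\alpha_t)(\Delta_t)_+ +\alpha_t(F_t)_+$, so one first has to dominate it by an auxiliary process satisfying the exact recursion and only then invoke Lemma~\ref{lemma1} --- a standard comparison step, but one that should be written out rather than waved through. A secondary point is to confirm that the imported fact $\norm{Q^{(1)}_t-Q^{(2)}_t}\to0$ really is independent of determinism (it is, for the reason noted above), and to carry the finite-horizon index through the peeling of the recursion for $Q_{\text{max}}$. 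With these in place the remaining estimates are exactly those already verified for Theorem~\ref{theorem2}.
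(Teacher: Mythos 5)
Your proposal matches the paper's own argument essentially step for step: it controls the one-sided errors $(Q^*-Q)_+$ and $(Q-Q_{\text{max}})_+$, obtains the $\gamma$-contraction bounds by peeling the chosen rollout against $Q_{\text{max}}$ (upper bound) and against the one-step backup toward $Q^*$ (lower bound), treats the two-network discrepancy as a vanishing offset $c_t$, and concludes via Lemma~\ref{lemma1} exactly as in the proof of Theorem~\ref{theorem2}. If anything, you are more explicit than the paper about the two delicate points (taking conditional expectations where $Q^*$'s Bellman equation holds only in expectation over $s'$, and the comparison step needed to apply Lemma~\ref{lemma1} to positive parts), so the proposal is correct and takes the same route as the paper.
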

\begin{proof}
    We just need to prove that $(Q^*-Q^{(1,2)})_+$ and $(Q^{(1,2)}-Q_{\text{max}})_+$ converge to 0 w.p.1, where $(\cdot)_+=\max(0,\cdot)$.\par
    On the one hand, similar from the proof of Theorem \ref{theorem2} and let $\Delta_t = (Q^*(s_t,a_t)-Q^{(1,2)}(s_t,a_t))_+$.
    \begin{align*}
        Q^*(s_t,a_t)-\tilde{R}^{(1,2)}_t&\leq Q^*(s_t,a_t) - (r_t+\gamma \tilde{Q}^{(1,2)}(s_{t+1},\tilde{a}^*)) \\
        &=r_t+\gamma Q^*(s_{t+1},a^*) -r_t-\gamma \tilde{Q}^{(1,2)}(s_{t+1},\tilde{a}^*)\\
        &=\gamma (\tilde{Q}(s_{t+1},\tilde{a}^*)-Q^*(s_{t+1},a^*))\\
        &\leq \gamma\norm{\Delta_t}.\\
    \end{align*}

    The rest is the same as the proof of Theorem \ref{theorem2}, and we have $(Q^*-Q^{(1,2)})_+$ converge to zero w.p.1.\par
    On the other hand, let $\Delta_t = (Q_{\text{max}}(s_t,a_t)-Q^{(1,2)}(s_t,a_t))_+$,\par
    We have
    \begin{align*}
        F_{t+1}&=\tilde{R}^{(1,2)}_t-Q^{max}_t\\
        &\leq\sum^{h^*_{(2,1)}}_{i=0}\gamma^i r_{t+i}+\gamma^{h^*+1}\tilde{Q}^{(1,2)}_{t+h^*_{(2,1)+1}}-(\sum^{h^*_{(2,1)}}_{i=0}\gamma^i r_{t+i}+\gamma^{h^*+1}Q^{max}_{t+h^*_{(2,1)+1}})\\
        &\leq \gamma^{h^*+1}(\tilde{Q}^{(1,2)}_{t+h^*_{(2,1)+1}}-Q^{max}_{t+h^*_{(2,1)+1}})\\
        &\leq \gamma \norm{\Delta_t}.\\
    \end{align*}
    
    The rest is the same as the proof of Theorem \ref{theorem2}, and we have $(Q^{(1,2)}-Q_{\text{max}})_+$ converge to zero w.p.1.\par
\end{proof}
When the enironment is nearly-deterministic, we can bound the performance of Q despite its non-convergence:
\begin{theorem}
    For a nearly-deterministic environment with factor $\mu$, in limit, GEM's performance can be bounded by 
    \begin{equation}
        V^{\tilde{\pi}}(s)\geq V^*(s)-\frac{2\mu}{1-\gamma},\forall s \in \mathcal{S}.
    \end{equation}
\end{theorem}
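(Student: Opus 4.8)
The plan is to combine the two-sided sandwich bound from Lemma~\ref{theorem3} with the near-determinism hypothesis to control the suboptimality of the greedy policy $\tilde\pi$ induced by the learned $Q$. First I would recall that, by Lemma~\ref{theorem3}, in the limit the learned value function satisfies $Q^*(s,a)\le Q(s,a)\le Q_{\text{max}}(s,a)$ for all $(s,a)$ w.p.1, and that by the near-determinism assumption (Definition~\ref{def:near-deterministic}) we have $Q_{\text{max}}(s,a)\le Q^*(s,a)+\mu$. Chaining these gives the uniform two-sided estimate $0\le Q(s,a)-Q^*(s,a)\le\mu$, i.e. $\|Q-Q^*\|_\infty\le\mu$.

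Next I would translate this value-function error into a performance bound on the greedy policy. Let $\tilde\pi(s)=\argmax_a Q(s,a)$. The standard greedy-policy-error argument runs as follows: for any state $s$, writing $a^*=\tilde\pi(s)$ and $a^\star\in\argmax_a Q^*(s,a)$, we have $Q^*(s,a^*)\ge Q(s,a^*)-\mu\ge Q(s,a^\star)-\mu\ge Q^*(s,a^\star)-\mu = V^*(s)-\mu$, using the sandwich bound twice and the definition of $a^*$ as the $Q$-maximizer. So the one-step greedy action loses at most $\mu$ relative to $V^*$. Then I would invoke the classical result (e.g.\ the performance-difference / policy-improvement bound) that if a policy is greedy with respect to a function that is $\varepsilon$-close to $Q^*$ in sup-norm, its value satisfies $V^{\tilde\pi}(s)\ge V^*(s)-\frac{2\varepsilon}{1-\gamma}$; plugging in $\varepsilon=\mu$ yields exactly $V^{\tilde\pi}(s)\ge V^*(s)-\frac{2\mu}{1-\gamma}$. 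Concretely, one proves this by bounding $V^*(s)-V^{\tilde\pi}(s)$ via $\|V^* - T^{\tilde\pi}V^*\|_\infty \le 2\|Q-Q^*\|_\infty$ and then applying the $\gamma$-contraction of $T^{\tilde\pi}$ to iterate, which converts the one-step error $2\mu$ into the geometric-series factor $\frac{1}{1-\gamma}$.

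The one subtlety I would be careful about is that Lemma~\ref{theorem3} is stated for the tabular variant with asymptotic ($w.p.1$, "in limit") guarantees, so the bound is a limiting statement; the theorem's phrasing "in limit, GEM's performance can be bounded" matches this, and I would state it accordingly rather than claiming a finite-time bound. The main obstacle, such as it is, is not any deep technical difficulty but rather making the greedy-policy-improvement step clean: one must be slightly careful that $\tilde\pi$ is greedy with respect to (one of) the learned $Q^{(1,2)}$, that the sandwich bound holds for that same network, and that the factor of $2$ (rather than $1$) in the numerator is genuinely needed because the error appears on both the action being chosen and the comparison action. Once the uniform bound $\|Q-Q^*\|_\infty\le\mu$ is in hand, everything else is the textbook argument, so I would keep the write-up short and defer to the standard reference for the $\frac{2\mu}{1-\gamma}$ propagation.
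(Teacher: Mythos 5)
Your proposal is correct and follows essentially the same route as the paper: both derive $\|Q-Q^*\|_\infty\leq\mu$ by combining the sandwich bound of Lemma~\ref{theorem3} with the near-determinism condition $Q_{\text{max}}\leq Q^*+\mu$, and then obtain $V^{\tilde{\pi}}(s)\geq V^*(s)-\frac{2\mu}{1-\gamma}$ via the standard greedy-policy suboptimality argument (the paper simply writes out the recursion $V^*(s)-V^{\tilde{\pi}}(s)\leq 2\mu+\gamma\left(V^*(s)-V^{\tilde{\pi}}(s)\right)$ inline rather than citing the textbook bound). No gaps worth noting.
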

\begin{proof}
 since we have $\norm{\tilde{Q}-Q^*} \leq \mu$,
 It is easy to show that 
 \begin{align*}
    &V^*(s)-V^{\tilde{\pi}}(s)\\
    &=Q^*(s,a^*)-Q^{\tilde{\pi}}(s,\tilde{a})\\
    &=Q^*(s,a^*)-\tilde{Q}(s,a^*)+\tilde{Q}(s,a^*)-Q^{\tilde{\pi}}(s,\tilde{a})\\
    &\leq \epsilon +\tilde{Q}(s,\tilde{a})-Q^{\tilde{\pi}}(s,\tilde{a})\\
    &=\epsilon + (\tilde{Q}(s,\tilde{a})-Q^*(s,\tilde{a}))+(Q^*(s,\tilde{a})-Q^{\tilde{\pi}}(s,\tilde{a}))\\
    &\leq 2\epsilon + \gamma (V^*(s)-V^{\tilde{\pi}}(s)).
\end{align*}

So we have the conclusion.
\end{proof}

\section{Hyperparameters}
\label{hyperparameters}
Here we listed the hyperparameters we used for the evaluation of our algorithm.
%
\begin{table}[H]
    \centering
    \begin{tabular}{@{}l|llllll@{}}
    \toprule
    Task           & HalfCheetah & Ant  & Swimmer & Humanoid & Walker & Hopper \\ \midrule
    Maximum Length $d$ & 1000        & 1000 & 1000    & 5        & 5      & 5      \\ \bottomrule
    \end{tabular}
    \caption{Maximum length of rollouts used in GEM across different tasks}
\end{table}

\begin{table}[H]
    \centering
    \begin{tabular}{cc}
    \toprule
    \textbf{Hyper-parameter}            & \textbf{GEM}                  \\ 
    \midrule
    Critic Learning Rate       & 1e-3                 \\
    Actor Learning Rate        & 1e-3                 \\
    Optimizer                  & Adam                 \\
    Target Update Rate($\tau$) & 0.6                  \\
    Memory Update Period($u$)  & 100                \\
    Memory Size  & 100000                             \\
    Policy Delay($p$)          & 2                    \\
    Batch Size                 & 100                  \\
    Discount Factor            & 0.99                 \\
    Exploration Policy         & $\mathcal{N}(0,0.1)$ \\
    Gradient Steps per Update  & 200                  \\ 
    \bottomrule
    \end{tabular}
    \caption{List of Hyperparameters used in GEM across different tasks}

\end{table}
The hyper-parameters for Atari games are kept the same as in the continuous domain, and other hyper-parameters are kept the same as Rainbow \citep{rainbow}.
\section{Additional Ablation Results}
\label{additional}
Here we include more ablation results of GEM. To verify the effectiveness of our proposed implicit planning, we compare our method with simple n-step Q learning combined with TD3. For a fair comparison, we include all different rollout lengths used in GEM's result. The result is shown in Figure \ref{nstep}. We can see that GEM significantly outperform simple n-step learning.\par
To understand the effects of rollout lengths, we also compare the result of different rollout lengths on Atari games. The result is shown below in Figure \ref{atari_ablation}. We can see that using short rollout length greatly hinders the performance of GEM.\par

To verify the effectiveness of GEM on the stochastic domain, we conduct experiments on Atari games with sticky actions, as suggested in \citep{machado2018revisiting}. As illustrated in Figure \ref{atari_sticky}, GEM is still competitive on stochastic domains.

\begin{figure*}[hb]
    \centering
    \includegraphics[width=15cm]{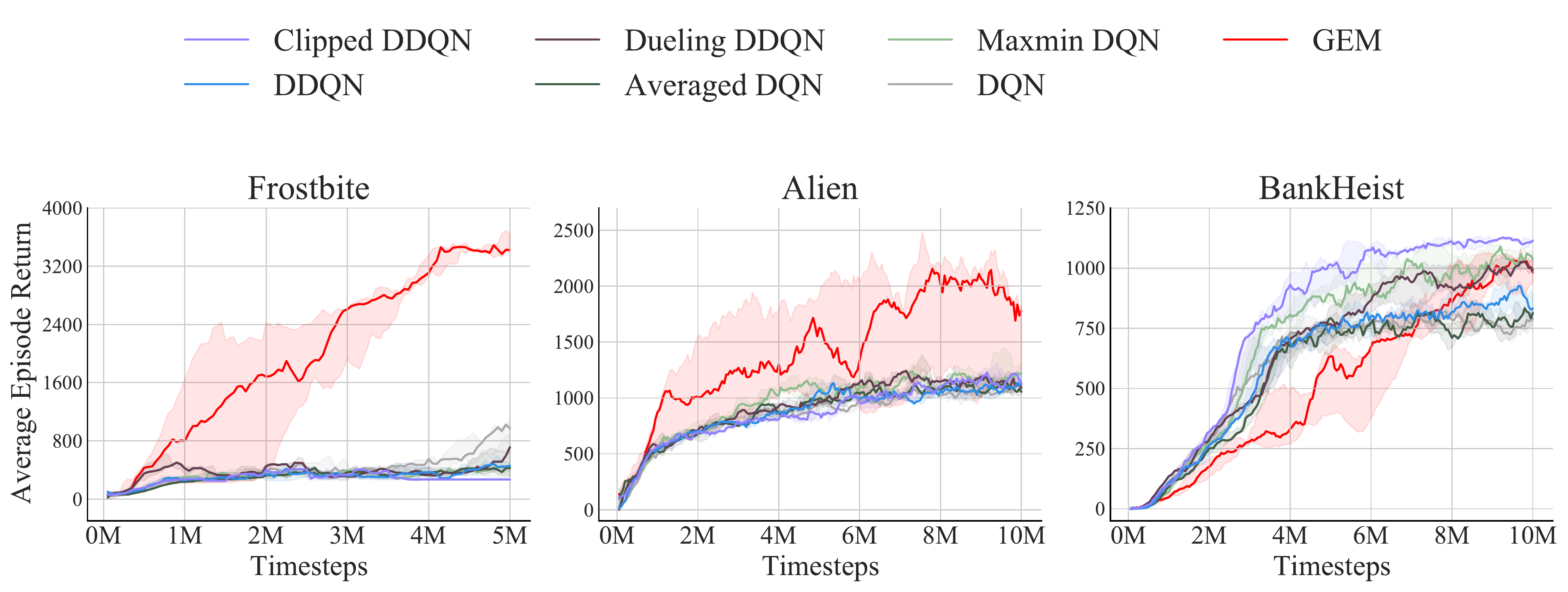}
    \caption{Comparison on 3 Atari games, with sticky actions to make the environment stochastic.}
    \label{atari_sticky}
\end{figure*}

\begin{figure*}[ht]
    \centering
    \begin{subfigure}[c]{0.3\textwidth}
    \centering
    \includegraphics[width=0.98\linewidth]{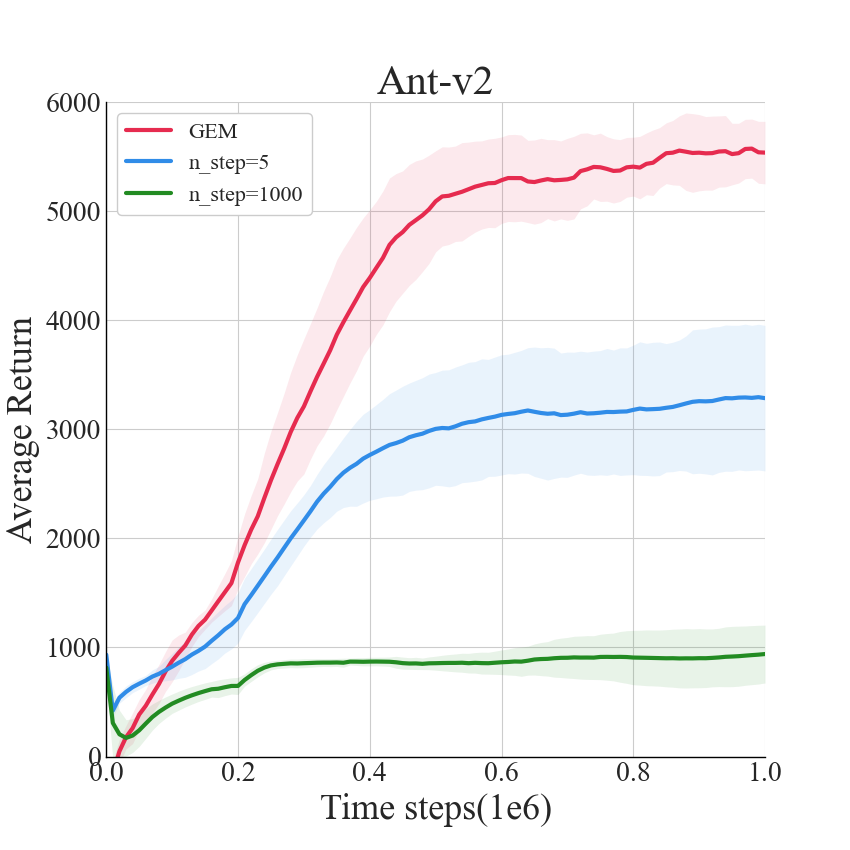}
    \end{subfigure}
    \begin{subfigure}[c]{0.3\textwidth}
    \centering
    \includegraphics[width=0.98\linewidth]{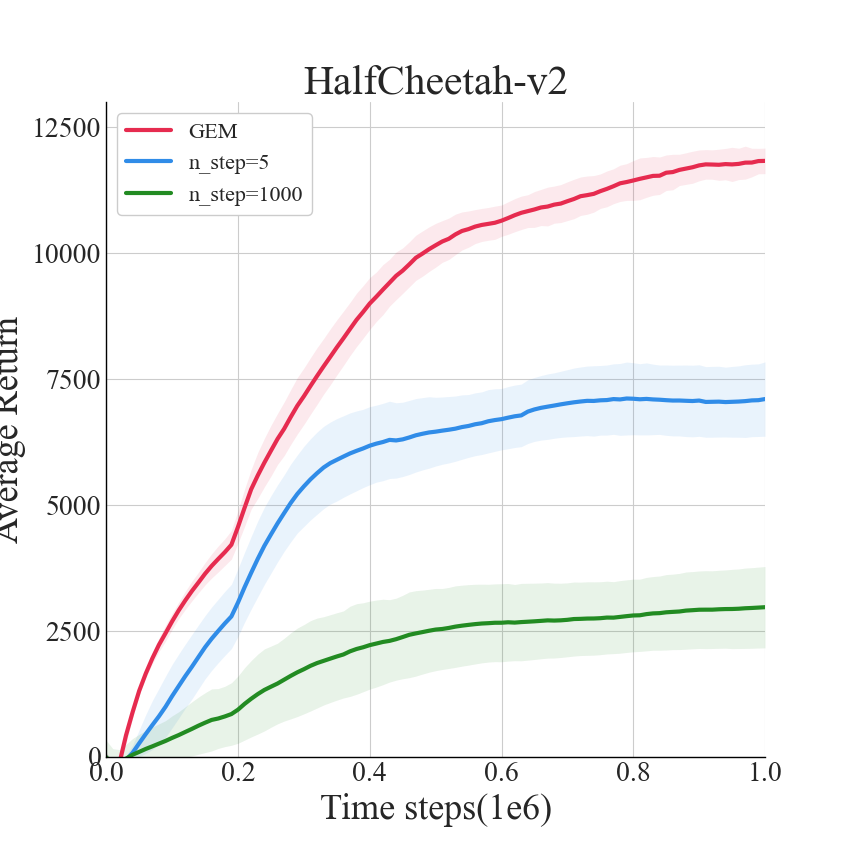}
    \end{subfigure}
    \begin{subfigure}[c]{0.3\textwidth}
    \centering
    \includegraphics[width=0.98\linewidth]{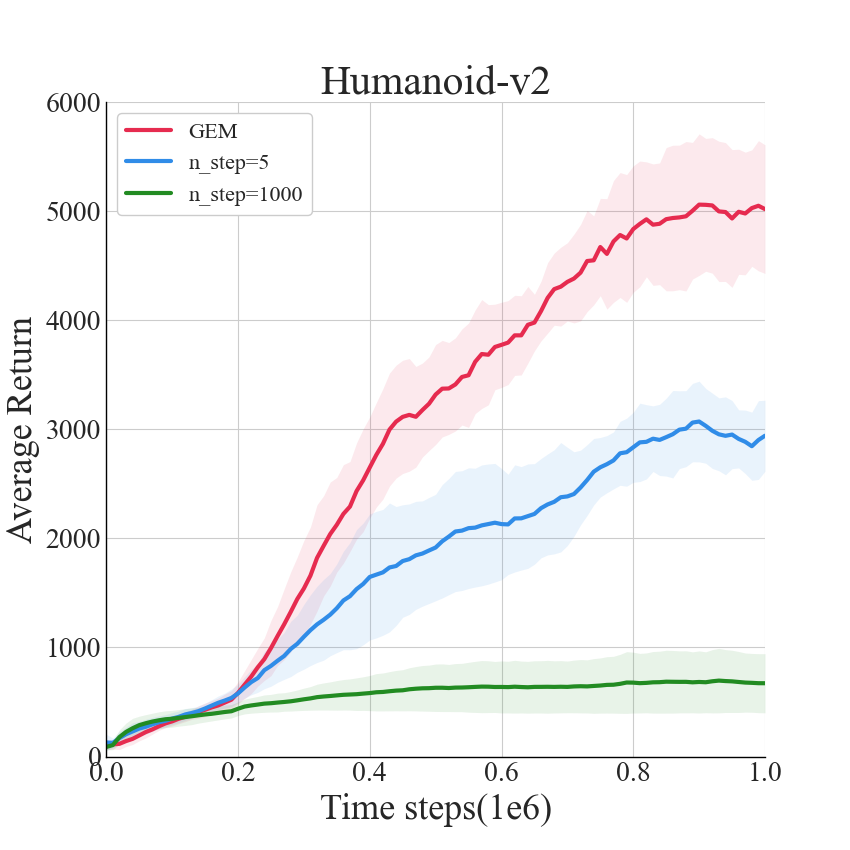}
    \end{subfigure}
    \newline
    \begin{subfigure}[c]{0.3\textwidth}
     \centering
     \includegraphics[width=0.98\linewidth]{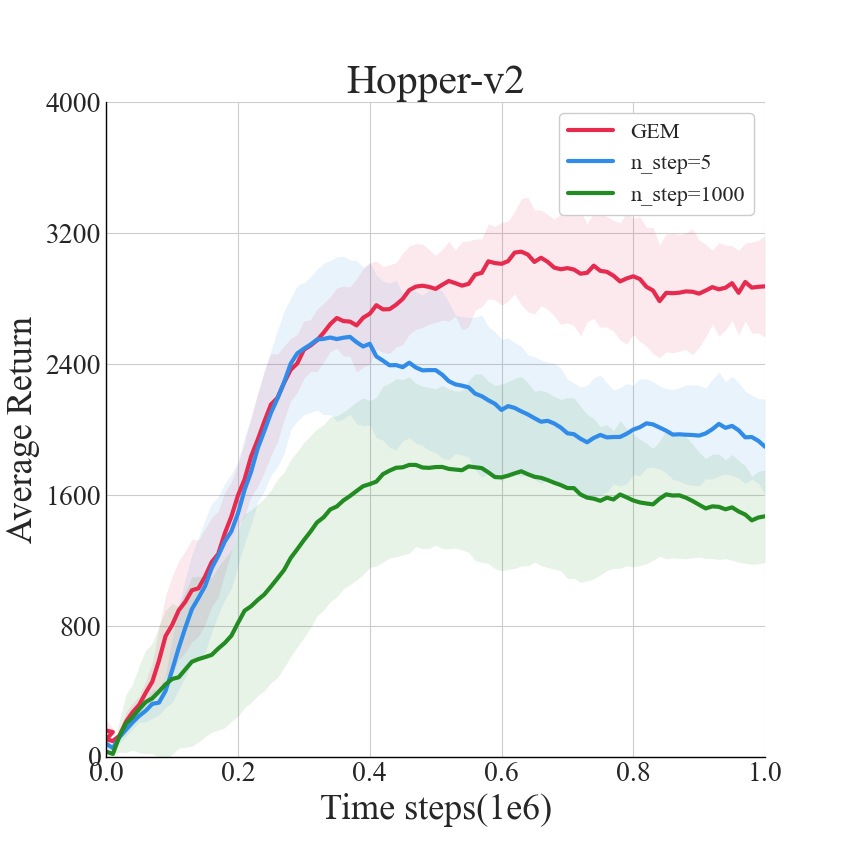}
    \end{subfigure}
    \begin{subfigure}[c]{0.3\textwidth}
     \centering
     \includegraphics[width=0.98\linewidth]{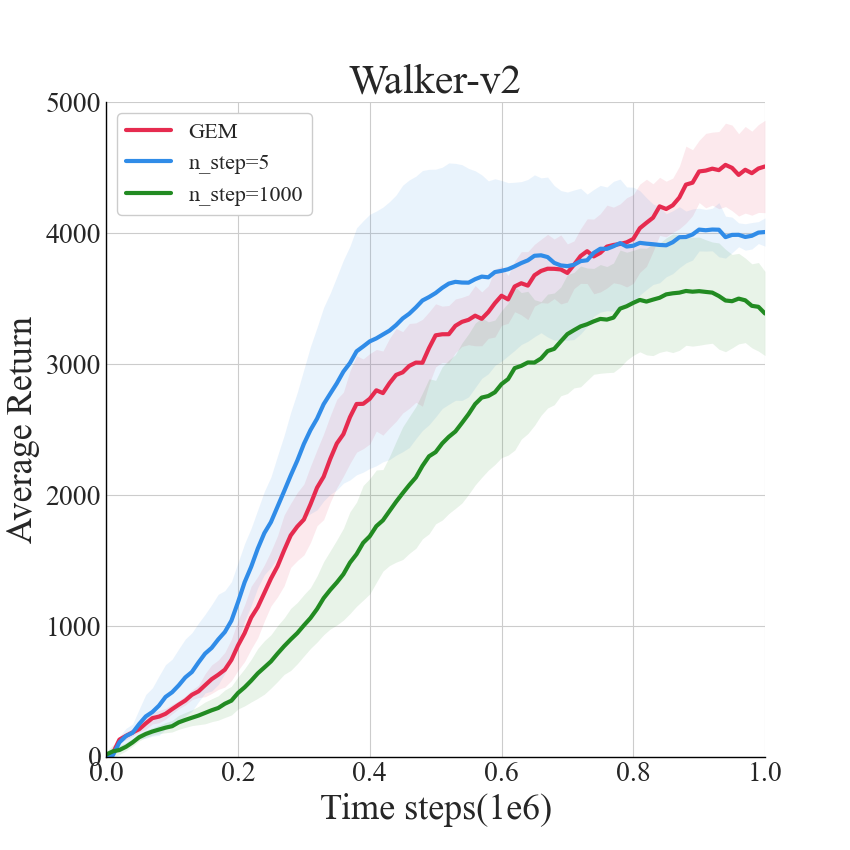}
    \end{subfigure}
    \begin{subfigure}[c]{0.3\textwidth}
     \centering
     \includegraphics[width=0.98\linewidth]{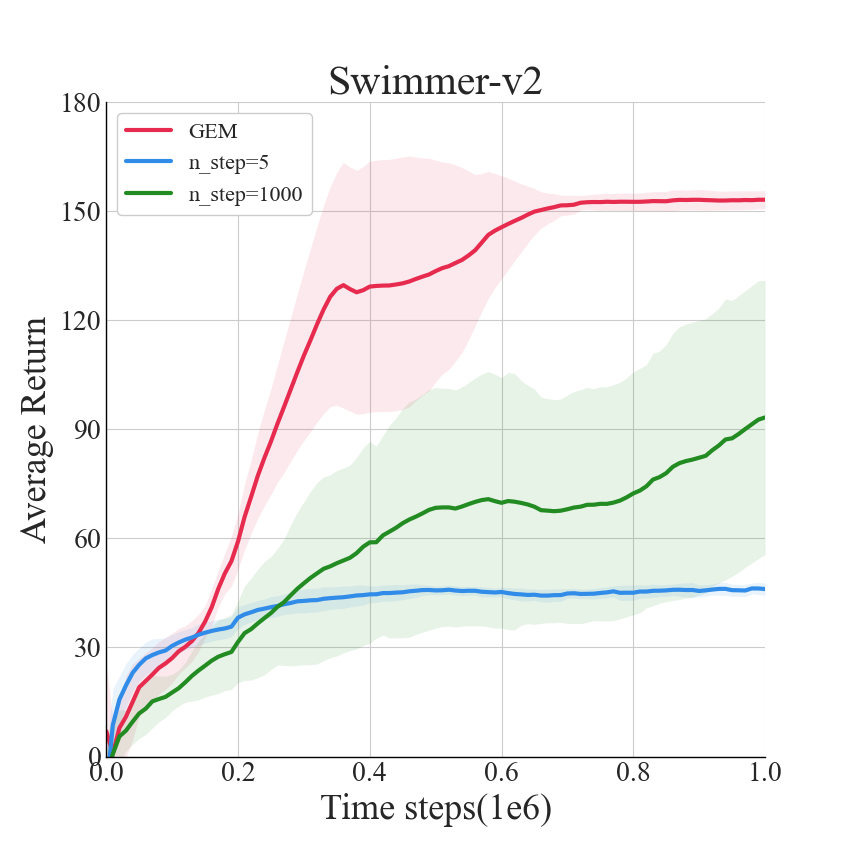}
    \end{subfigure}
    \caption{Comparison with simple n-step learning. The shaded region represents half a standard deviation of the average evaluation. Curves are smoothed uniformly for visual clarity.}
    \label{nstep}
   \end{figure*}

\begin{figure*}[hb]
    \centering
    \includegraphics[width=15cm]{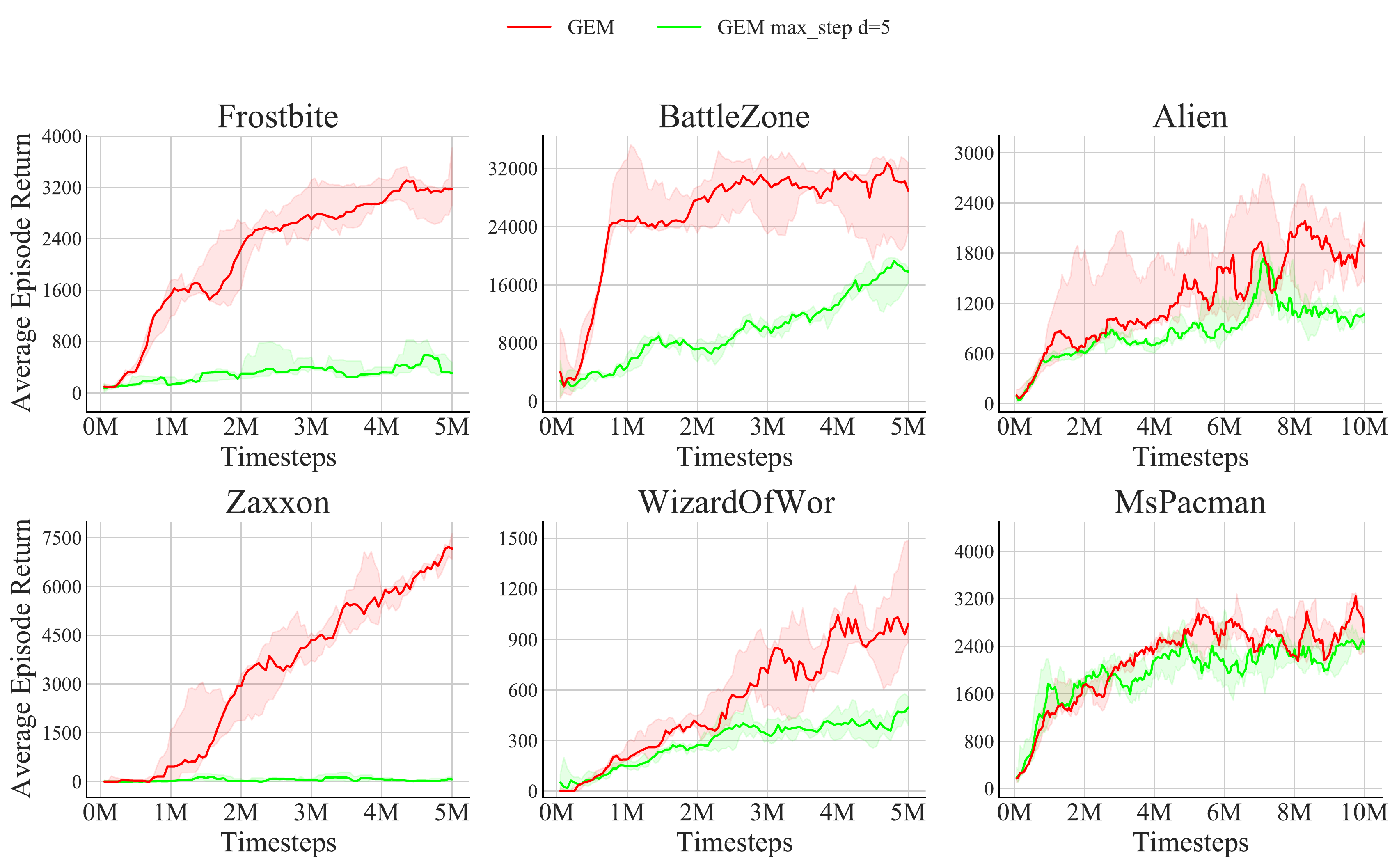}
    \caption{Ablation study on 6 Atari games. Limiting rollout lengths greatly affects the performance of GEM, which proves that GEM can use long rollout trajectories  effectively.}
    \label{atari_ablation}
\end{figure*}

\end{document}